%%%%%%%% ICML 2026 EXAMPLE LATEX SUBMISSION FILE %%%%%%%%%%%%%%%%%

\documentclass{article}

% Recommended, but optional, packages for figures and better typesetting:
\usepackage{microtype}
\usepackage{graphicx}
\usepackage{subcaption}
\usepackage{booktabs} % for professional tables

% hyperref makes hyperlinks in the resulting PDF.
% If your build breaks (sometimes temporarily if a hyperlink spans a page)
% please comment out the following usepackage line and replace
% \usepackage{icml2026} with \usepackage[nohyperref]{icml2026} above.
\usepackage{hyperref}

% Attempt to make hyperref and algorithmic work together better:

% Use the following line for the initial blind version submitted for review:
% \usepackage{icml2026}

% For preprint, use
\usepackage[preprint]{icml2026}

% If accepted, instead use the following line for the camera-ready submission:
% \usepackage[accepted]{icml2026}

\usepackage{amsmath}
\usepackage{amssymb}
\usepackage{mathtools}
\usepackage{amsthm}

% if you use cleveref..
\usepackage[capitalize,noabbrev]{cleveref}
\usepackage{wrapfig}

\newcommand{\mc}{\mathcal}
\newcommand{\mb}{\mathbb}

\DeclareMathOperator*{\argmax}{argmax}

%%%%%%%%%%%%%%%%%%%%%%%%%%%%%%%%
% THEOREMS
%%%%%%%%%%%%%%%%%%%%%%%%%%%%%%%%
\theoremstyle{plain}
\newtheorem{theorem}{Theorem}[section]

\newtheorem{lemma}[theorem]{Lemma}
\newtheorem{corollary}[theorem]{Corollary}
\theoremstyle{definition}
\newtheorem{definition}[theorem]{Definition}
\newtheorem{assumption}[theorem]{Assumption}
\theoremstyle{remark}
\newtheorem{remark}[theorem]{Remark}

% Todonotes is useful during development; simply uncomment the next line
%    and comment out the line below the next line to turn off comments
%\usepackage[disable,textsize=tiny]{todonotes}
\usepackage[textsize=tiny]{todonotes}

% The \icmltitle you define below is probably too long as a header.
% Therefore, a short form for the running title is supplied here:
\icmltitlerunning{Intrinsic Reward Policy Optimization for Sparse-Reward Environments}

\begin{document}

\twocolumn[
\icmltitle{Intrinsic Reward Policy Optimization for Sparse-Reward Environments}

% It is OKAY to include author information, even for blind
% submissions: the style file will automatically remove it for you
% unless you've provided the [accepted] option to the icml2025
% package.

% List of affiliations: The first argument should be a (short)
% identifier you will use later to specify author affiliations
% Academic affiliations should list Department, University, City, Region, Country
% Industry affiliations should list Company, City, Region, Country

% You can specify symbols, otherwise they are numbered in order.
% Ideally, you should not use this facility. Affiliations will be numbered
% in order of appearance and this is the preferred way.
% \icmlsetsymbol{equal}{*}

\begin{icmlauthorlist}
\icmlauthor{Minjae Cho}{uiuc}
\icmlauthor{Huy T. Tran}{uiuc}
%\icmlauthor{}{sch}
%\icmlauthor{}{sch}
%\icmlauthor{}{sch}
\end{icmlauthorlist}

\icmlaffiliation{uiuc}{The Grainger College of Engineering, University of Illinois Urbana-Champaign, Urbana, USA}
% \icmlaffiliation{comp}{Company Name, Location, Country}
% \icmlaffiliation{sch}{School of ZZZ, Institute of WWW, Location, Country}

\icmlcorrespondingauthor{Minjae Cho}{minjae5@illinois.edu}
% \icmlcorrespondingauthor{Firstname2 Lastname2}{first2.last2@www.uk}

% You may provide any keywords that you
% find helpful for describing your paper; these are used to populate
% the "keywords" metadata in the PDF but will not be shown in the document
\icmlkeywords{Reinforcement learning, hierarchical RL, intrinsic rewards}

\vskip 0.3in
]
% this must go after the closing bracket ] following \twocolumn[ ...

% This command actually creates the footnote in the first column listing the
% affiliations and the copyright notice. The command takes one argument, which
% is text to display at the start of the footnote. The \icmlEqualContribution
% command is standard text for equal contribution. Remove it (just {}) if you
% do not need this facility.

% Use ONE of the following lines. DO NOT remove the command.
% If you have no special notice, KEEP empty braces:
\printAffiliationsAndNotice{}  % no special notice (required even if empty)
% Or, if applicable, use the standard equal contribution text:
% \printAffiliationsAndNotice{\icmlEqualContribution}

\begin{abstract}
    % Importance of exploration in RL
    Exploration is essential in reinforcement learning as an agent relies on trial and error to learn an optimal policy.
    % Problem in exploration when reward is sparse
    However, when rewards are sparse, naive exploration strategies, like noise injection, are often insufficient.
    % Motivate the intrinsic rewards 
    Intrinsic rewards can also provide principled guidance for exploration by, for example, combining them with extrinsic rewards to optimize a policy or using them to train subpolicies for hierarchical learning.
    % These intrinsic rewards are typically used either to optimize a policy with the combined intrinsic and extrinsic rewards or to employ pre-trained sub-policies using those rewards as temporally extended actions.
    However, the former approach suffers from unstable credit assignment,
    while the latter exhibits sample inefficiency and sub-optimality.
    % propose and explain IRPO
    We propose a policy optimization framework that leverages multiple intrinsic rewards to directly optimize a policy for an extrinsic reward without pretraining subpolicies.
    Our algorithm---intrinsic reward policy optimization (IRPO)---achieves this by using a surrogate policy gradient that provides a more informative learning signal than the true gradient in sparse-reward environments.
    % The IRPO gradient is computed by backpropagating gradients from multiple exploratory policies (trained using intrinsic rewards) to a base policy.
    % The IRPO gradient is found by optimizing multiple exploratory policies using intrinsic rewards from a base policy to discover strong extrinsic reward signals and backpropagating these signals to the base policy using the chain rule.
    We demonstrate that IRPO improves performance and sample efficiency relative to baselines in discrete and continuous environments, and formally analyze the optimization problem solved by IRPO.
    Our code is available at \url{https://github.com/Mgineer117/IRPO}.
    % (optimal policy within a subset of the full policy space)
\end{abstract}

\begin{figure*}
    \centering
    \includegraphics[width=0.9\linewidth]{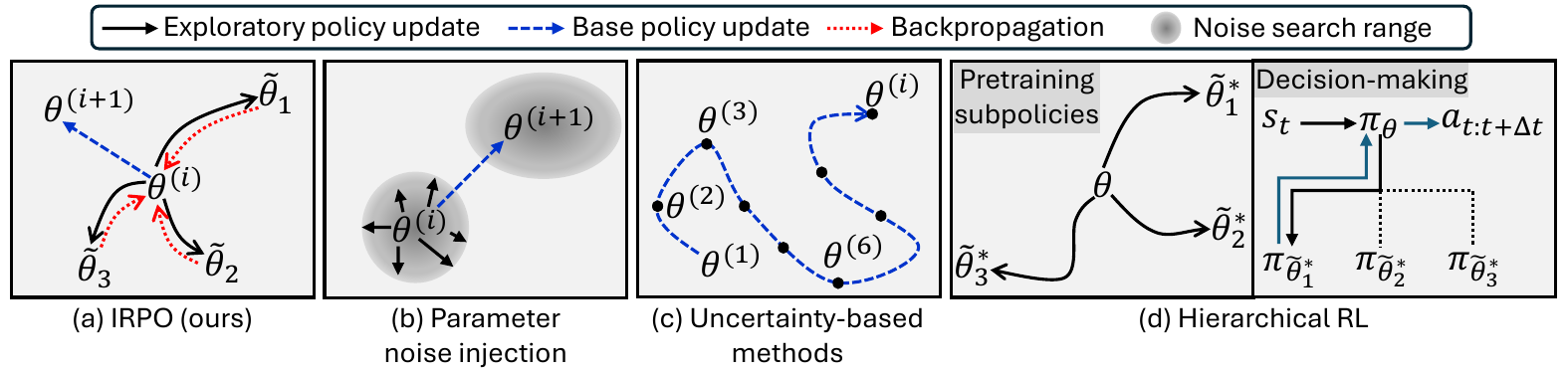}
    \caption{
    % This figure illustrates the general parameter update procedures of various canonical exploration methods under our study.  
(a) IRPO optimizes multiple exploratory policies (from the current base policy $\pi_{\theta^{(i)}}$) using intrinsic rewards and uses their gradients to update the base policy via backpropagation.  
(b) Policy noise injection methods generate nearby policies by adding random noise and use their gradients to update the base policy.  
(c) Uncertainty-based methods augment the true (extrinsic) reward with intrinsic rewards based on various forms of uncertainty.
(d) Hierarchical RL pretrains subpolicies (based on intrinsic rewards) and uses them as temporally extended actions.
Unlike the other methods, our approach is less restricted in its search range, directly optimizes for extrinsic rewards, and does not require pretrained subpolicies.
}
    \label{fig:irpo}
\end{figure*}

%%%%%%%%%%%%%%%%%%%%%%%%%%%%%%%%%%%%%%%%%%%%%%%%%%%%%%%%%%%%%%%%%%%%%%%%%%
\section{Introduction}
%%%%%%%%%%%%%%%%%%%%%%%%%%%%%%%%%%%%%%%%%%%%%%%%%%%%%%%%%%%%%%%%%%%%%%%%%%
% Exploration of RL
Reinforcement learning (RL) aims to learn an optimal policy through trial and error. As such, it is important for the agent to thoroughly explore the environment. 
% naive approach to enhance exploration in the literature
Many exploration strategies have been proposed, such as noise injection in the action space \cite{sutton1999policy, schulman2017proximal, haarnoja2018soft} or in the policy parameter space \cite{fortunato2017noisy, plappert2018parameter} (see \Cref{fig:irpo}b). 
% drawbacks of such methods
However, these methods do not induce sufficient diversity in collected experiences, limiting their performance in sparse-reward environments. 
% Furthermore, parameter space noise injection becomes inefficient with an increasing number of policy parameters \cite{plappert2018parameter}.
%, limiting its effectiveness in complex environments requiring expressive function approximators \cite{achiam2021exploration}.

% motivate the use of intrinsic rewads
An alternative approach adopts intrinsic rewards to provide principled guidance for exploration.
% common types of intrinsic rewards
Common strategies include: (1) incentivizing the agent to visit states with high uncertainty, typically characterized by state visitation counts \cite{bellemare2016unifying, tang2017exploration, machado2020count}, policy entropy \cite{schulman2017proximal, haarnoja2018soft}, or uncertainty in statistical models \cite{burda2018exploration, yang2024exploration};
%\todo{something?}  or for visiting states with high epistemic uncertainty in arbitrary statistical model 
%Common strategies include: (1) incentivizing the agent with an entropy \todo{something?} \cite{schulman2017proximal, haarnoja2018soft} or for visiting states with high epistemic uncertainty in arbitrary statistical model \cite{burda2018exploration, yang2024exploration}, 
and (2) hierarchical approaches that use pretrained subpolicies---typically based on diffusion-maximizing intrinsic rewards---as temporally extended actions \cite{machado2017laplacian, wu2018laplacian, jinnai2020exploration, gomez2023proper}.

% describing each intrinsic reward-based methods in detail
Methods that use uncertainty-based intrinsic rewards typically optimize a policy by maximizing a sum of the true (extrinsic) reward and intrinsic rewards (see \Cref{fig:irpo}c).
% mention drawbacks of aforementioned methods
These methods enhance exploration, but they complicate credit assignment due to the use of an augmented reward signal \cite{ladosz2022exploration}. 
% details of such drawbacks
For example, the agent may fail to recognize task completion when extrinsic rewards are overshadowed by poorly scaled or changing intrinsic rewards.
%, leading to unstable learning . 
% Moreover, count-based methods scale poorly in continuous environments due to the need to track individual states.
In addition, count-based state visitation methods do not scale to continuous and high-dimensional state spaces, as learning the density model required for measuring such counts becomes inefficient and noisy \cite{ladosz2022exploration}.
Hierarchical RL avoids poor credit assignment and scales well by directly optimizing extrinsic rewards using subpolicies (see \Cref{fig:irpo}d).
However, temporally extended actions (subpolicies) lead to sub-optimality, as it limits fine-grained decision-making \cite{sutton1999between}, and sample inefficiency, due to pretraining of subpolicies.
% We visually show the optimization process of each distinct method 
%\begin{figure}[t!]
%    \centering
%    \includegraphics[width=1.0\linewidth]{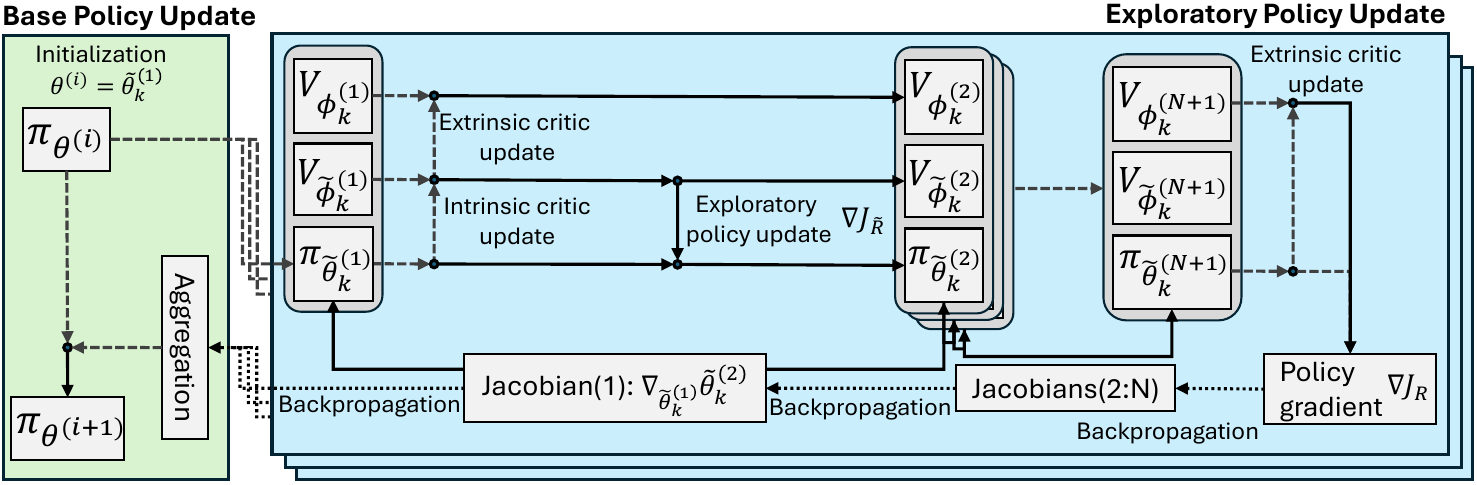}
%    \caption{This figure illustrates IRPO. Starting from the $i$-th base policy $\theta^{(i)} = \tilde{\theta}^{(1)}$ and $K$ intrinsic rewards, we perform $N$ inner updates to reach $\tilde{\theta}^{(N+1)}_k$ for each $k$-th intrinsic reward function. Extrinsic policy gradients from $\tilde{\theta}^{(N+1)}_k$ are backpropagated to update $\theta^{(i)}$, yielding the $(i+1)$-th base policy via an outer-level update.}
%    \label{fig:irpo}
%\end{figure}

We propose an intrinsic reward policy optimization (IRPO) algorithm to address these challenges.
Our key idea is to use a surrogate policy gradient, which we refer to as the IRPO gradient, to directly optimize a policy, replacing the uninformative true gradient in sparse-reward environments.
As shown in \Cref{fig:irpo}a, IRPO constructs the IRPO gradient by backpropagating the extrinsic reward signals of exploratory policies, which are optimized using intrinsic rewards, to a base policy using the chain rule.
In this work, we use diffusion-maximizing intrinsic rewards \cite{gomez2023proper}, though in principle, our approach could use any set of intrinsic rewards.
In spite of the bias in the IRPO gradient, we show that IRPO outperforms relevant baselines in both performance and sample efficiency across discrete and continuous sparse-reward environments.
We also include comprehensive ablations and formally analyze the optimization problem solved by IRPO.
% theoretically prove that the optimized IRPO policy is pseudo-optimal.
% We also provide a theoretical analysis that clarifies the IRPO gradient’s mechanism and proves that the final exploratory policy achieving the highest performance with respect to the extrinsic reward satisfies a pseudo-optimality---that is, an optimal policy within a subset of the full policy space.
% We also provide a theoretical analysis that clarifies the IRPO gradient’s mechanism and proves that, under a mild assumption, the resulting base policy satisfies a parameter sub-optimality bound in a single intrinsic reward setting.

%%%%%%%%%%%%%%%%%%%%%%%%%%%%%%%%%%%%%%%%%%%%%%%%%%%%%%%%%%%%%%%%%%%%%%%%%%
\section{Background}
%%%%%%%%%%%%%%%%%%%%%%%%%%%%%%%%%%%%%%%%%%%%%%%%%%%%%%%%%%%%%%%%%%%%%%%%%%
% \paragraph{Markov decision process.}
We model our problem as an Markov decision process (MDP) \cite{puterman2014markov} defined by a tuple $ \mc{M} = ( \mathcal{S}, \mathcal{A}, T, R, \gamma ) $, where $\mc{S}$ is a set of states, $\mc{A}$ is a set of actions, $ T : \mathcal{S} \times \mathcal{A} \times \mathcal{S} \rightarrow \mb{R} $ is the transition probability distribution, $ R : \mathcal{S} \times \mathcal{A} \rightarrow [0, R_{\max}] $ is the reward function bounded by $R_{\max} \in \mathbb{R}$, and $ \gamma \in [0,1) $ is the discount factor. 
At time step $t$, the agent executes an action $a_t \in \mathcal{A}$ given the current state $s_t \in \mathcal{S}$, after which the system transitions to state $s_{t+1} \sim T(\cdot \mid s_t, a_t)$ and the agent receives reward $r_t = R(s_t, a_t)$.
The agent acts based on a stochastic policy $\pi: \mc{S} \times \mc{A} \rightarrow [0, 1]$.

% \paragraph{Policy performance objective.}
Let $g_t = \sum_{l=0}^{\infty} \gamma^l r_{t+l}$ be the return.
The agent's typical goal in RL is to find a policy $\pi^\star$ that maximizes its expected return from each state $s_t$.
RL algorithms often use the state-value function, $V^\pi(s) = \mathbb{E}_{\pi} \left[ g_t \mid s_t = s \right]$, or the action-value function, $Q^\pi(s,a) = \mathbb{E}_{\pi} \left[ g_t \mid s_t = s, a_t = a \right]$, to express the expected return of the policy $\pi$.

% i: reward
% k for exploratory the iteration
% l for the return

% \paragraph{Actor-critic algorithms.}
% On-policy actor-critic algorithms \cite{konda1999actor} are a class of policy gradient methods for RL.
Policy gradient methods \cite{sutton1999policy} optimize a policy $\pi_\theta$ parameterized by $\theta \in \mb{R}^m$ by performing gradient ascent on the performance objective
$J(\theta) =\mathbb{E}_{s_0 \sim d_0} \left[ V^{\pi_\theta}(s_0) \right]$, where $d_0$ is the initial state distribution.
Specifically, we perform the following update:
\begin{equation}\label{eqn:policy_gradient_update}
    \theta^{(i+1)} = \theta^{(i)} + \eta \nabla_{\theta^{(i)}} J(\theta^{(i)}),
    % \quad \text{where} \; \nabla_\theta J(\theta) = \mathbb{E}_{\pi_\theta} \left[ \nabla_\theta \log \pi_\theta(a \mid s) \, Q^{\pi_\theta}(s, a) \right],
\end{equation}
where $i$ is the current iteration, $\eta$ is a learning rate, $\nabla_\theta J(\theta) = \mathbb{E}_{d^{\pi_\theta}} \left[ Q^{\pi_\theta}(s, a) \; \nabla_\theta \log \pi_\theta(a \mid s) \right]$ is the policy gradient, and $d^{\pi_\theta}$ is the discounted state occupancy induced by $\pi_\theta$.
We can also express the policy gradient in a variance-reduced form as follows \cite{schulman2018highdimensionalcontinuouscontrolusing}:
\begin{equation}\label{eqn:policy_gradient_with_advantage}
    \nabla_\theta J(\theta) = \mathbb{E}_{d^{\pi_\theta}} \left[ A^{\pi_\theta}(s, a) \cdot  \nabla_\theta \log \pi_\theta(a \mid s) \right],
\end{equation}
where $A^{\pi_\theta}$ is the advantage function of policy $\pi_\theta$.

% and commonly represented using a state-value function as $A^{\pi_\theta}(s_t, a_t) = ( r_t + \gamma V^{\pi_\theta}(s_{t+1}) - V^{\pi_\theta}(s_t))$.
Actor-critic algorithms learn a value function to estimate the policy gradient.
In this work, we learn a state-value function $V_\phi$, parameterized by $\phi$, to estimate a TD residual form of the advantage $A^{\pi_\theta}(s_t,a_t) \approx r_t + \gamma V_\phi(s_{t+1}) - V_\phi(s_t)$, where we simplify notation by dropping the policy term $\pi_\theta$.
% Actor-critic algorithms learn a state-value function $V_\phi$, parameterized by $\phi$, for the policy $\pi_\theta$ to estimate the policy gradient in \Cref{eqn:policy_gradient_with_advantage}.
% Specifically, they approximate the advantage function as $A^{\pi_\theta}(s_t,a_t) \approx r_t + \gamma V_\phi(s_{t+1}) - V_\phi(s_t))$ where we simplify notation by dropping the policy term $\pi_\theta$.
We update the state-value function by performing the following update:
\begin{equation}\label{eqn:action_value_loss}
    % \phi^\star = \argmin_{\phi} \mathbb{E}_{d^{\pi\theta}} \left[ \left( \hat{V}(s_t) - V_\phi(s_t) \right)^2 \right],
    \phi^{(i+1)} = \phi^{(i)} - \eta \nabla_{\phi^{(i)}}\left( \mb{E}_{d^{\pi_\theta}} \left[ (\hat V(s_t) - V_{\phi^{(i)}}(s_t))^2 \right] \right),
\end{equation}
where $\hat V(s_t) \coloneq r_t + \gamma \Big( (1-\lambda) V_{\phi^{(i)}}(s_{t+1}) + \lambda \hat{V}(s_{t+1}) \Big)$ and $ \lambda \in [0, 1]$ a hyperparameter for bootstrapping.

\begin{figure*}[t]
    \centering
    \includegraphics[width=0.9\linewidth]{}
    \caption{We assume an actor-critic framework with multiple exploratory policies, each having its own intrinsic and extrinsic critics, which are used to update the exploratory and base policy, respectively.}
    \label{fig:architecture}
\end{figure*}

%%%%%%%%%%%%%%%%%%%%%%%%%%%%%%%%%%%%%%%%%%%%%%%%%%%%%%%%%%%%%%%%%%%%%%%%%%
\section{Our Algorithm: Intrinsic Reward Policy Optimization}\label{sec:method}
%%%%%%%%%%%%%%%%%%%%%%%%%%%%%%%%%%%%%%%%%%%%%%%%%%%%%%%%%%%%%%%%%%%%%%%%%%
% specific proposition and anticipated results
We propose an algorithm, IRPO, that uses intrinsic rewards to construct a surrogate policy gradient---which we call the IRPO gradient---to directly optimize a policy in sparse-reward environments. 
% More specifically, our goal is to optimize a base policy $\pi_\theta$ that maximizes extrinsic rewards $R$.
Assume we are given $ K \geq 1 $ intrinsic reward functions, denoted by $ \tilde{R}_k $ for $ k = 1, 2, \dots, K $.
Each intrinsic reward $\tilde{R}_k$ is then associated with an exploratory policy $\pi_{\tilde{\theta}_k}$, as shown in \Cref{fig:architecture}.
Building on the actor-critic framework, each exploratory policy $\pi_{\tilde{\theta}_k}$ is associated with an intrinsic critic $V_{\tilde{\phi}_k}$, which estimates the value of $\pi_{\tilde{\theta}_k}$ with respect to intrinsic reward ${\tilde{R}_k}$, and an extrinsic critic $V_{\phi_k}$, which estimates the value of $\pi_{\tilde{\theta}_k}$ with respect to the extrinsic reward $R$.
Intrinsic critics are used to optimize the exploratory policies with respect to their corresponding intrinsic reward functions---given appropriately defined intrinsic rewards, these policies should discover strong extrinsic reward signals.
Then, extrinsic critics are used to compute the policy gradients of each exploratory policy with respect to the extrinsic reward, and these gradients are backpropagated to construct the IRPO gradient for updating the base policy $\pi_\theta$ (to maximize the extrinsic reward).

\begin{algorithm}[t]
    \caption{\textbf{IRPO}: Intrinsic reward policy optimization}
    \label{alg:irpo}
    \begin{algorithmic}[1]
        \REQUIRE $K$ intrinsic reward functions $\{ \tilde{R}_k\}_{k=1}^K$
        \item[\textbf{Initialize:}]  Base policy $\pi_\theta$ and $K$ pairs of intrinsic and extrinsic critics $\{ (V_{\tilde{\phi}_k}, V_{\phi_k)}\}_{k=1}^K$
        \FOR{iteration: $i=1,2,\dots$}
            \STATE \texttt{/* Exploratory policy updates */}
            \FOR{intrinsic reward function: $k=1,2,\dots,K$}
            \STATE Initialize exploratory policy $\tilde{\theta}^{(1)}_k = \theta^{(i)}$
            % \STATE Initialize each exploratory policy with a duplicate of the base policy
                \FOR{exploratory policy update: $j=1,\dots,N$}
                    \STATE Collect rollouts using $\pi_{\tilde{\theta}_k^{(j)}}$
                    \STATE Update intrinsic and extrinsic critics with respect to $\tilde{R}_k$ and $R$, respectively
                    \STATE Update exploratory policy $\tilde{\theta}_k^{(j+1)} = \tilde{\theta}_k^{(j)} + \eta \nabla_{\tilde{\theta}_k^{(j)}} J_{\tilde{R}_k}(\tilde{\theta}^{(j)}_k)$
                    \STATE Compute and store the Jacobian $\nabla_{\tilde{\theta}^{(j)}_k} \tilde{\theta}^{(j+1)}_k$
                \ENDFOR
            \ENDFOR
            \STATE \texttt{/* Base policy update */}
            \FOR{intrinsic reward function: $k=1,2,\dots,K$}
                \STATE Collect rollouts using $\pi_{\tilde{\theta}_k^{(N+1)}}$
                \STATE Compute policy gradient with respect to extrinsic rewards $\nabla_{\tilde{\theta}^{(N+1)}_k} J_R(\tilde{\theta}^{(N+1)}_k)$
            \ENDFOR
            \STATE Compute the IRPO gradient using \Cref{eqn:IRPO_gradient}
            \STATE Update base policy using \Cref{eqn:trust_region_update}
        \ENDFOR
    \end{algorithmic}
\end{algorithm}

We discuss IRPO in more detail below and provide pseudo-code in \Cref{alg:irpo}. The algorithm operates in a bi-level formulation, where given the current base policy, exploratory policies are updated as discussed in \Cref{sec:exploratory_policy_updates}. The base policy is then updated as discussed in \Cref{sec:base_policy_update}.

% The performance objectives of $ \tilde{\theta}_k $ with respect to each $k$-th intrinsic reward and the extrinsic reward are denoted by $ J_{\tilde{R}_k}(\tilde{\theta}_k) $ and $ J_R(\tilde{\theta}_k) $, respectively.

\subsection{Exploratory Policy Updates}\label{sec:exploratory_policy_updates}
% \subsection{Inner-level Updates}
%%% mention we assume multiple update %%%
Given the current base policy parameters $\theta$, we first initialize exploratory policies as duplicates, such that $\tilde{\theta}_k^{(1)} = \theta^{(i)}$ for $k = 1, 2, \dots, K$.
Additionally, we initialize the intrinsic and extrinsic critics when the base policy is new ($i=1$) or retain their parameters from the last update otherwise.
We then perform $N > 1$ updates to each exploratory policy using its own intrinsic rewards.
% as follows, where each exploratory policy $\pi_{\tilde{\theta}_k}$ is updated independently with its own intrinsic reward.

%%% CRITIC UPDATE %%%
\paragraph{Updating exploratory policies.} 
For the $j$-th update for exploratory policy $\pi_{\tilde{\theta}^{(j)}_k}$, where $ j \in \{1, 2, \dots, N\}$ and $k \in \{1, 2, \dots, K \}$, we first collect rollouts from $\pi_{\tilde{\theta}^{(j)}_k}$.
Each rollout contains experiences of the form $(s_t, a_t, r_t,\tilde{r}_{t,k})$, where $\tilde{r}_{t,k} = \tilde{R}_k(s_t, a_t)$.
We use these experiences to update the intrinsic critic $V_{\tilde{\phi}^{(j)}_k}$ (based on intrinsic rewards) and the extrinsic critic $V_{\phi^{(j)}_k}$ (based on extrinsic rewards) for each exploratory policy using \Cref{eqn:action_value_loss}.
We use the intrinsic critic to estimate the policy gradient using \Cref{eqn:policy_gradient_with_advantage}, which is used to update the exploratory policy using \Cref{eqn:policy_gradient_update}.
We use the final extrinsic critic $V_{\phi^{(N+1)}_k}$ to update the base policy as discussed in \Cref{sec:base_policy_update}.

\paragraph{Storing Jacobians for future base policy updates.}
For each exploratory policy update $j \in \{1, 2, \dots, N\}$, we also calculate and store the Jacobian between the consecutive exploratory policy parameters as follows:
\begin{equation}\label{eqn:the_hessian_between_parameters}
    \begin{aligned}
        \nabla_{\tilde{\theta}^{(j)}_k} \tilde{\theta}^{(j+1)}_k & = \frac{\partial }{\partial \tilde{\theta}^{(j)}_k} \left[ \tilde{\theta}^{(j)}_k + \eta \nabla_{\tilde{\theta}^{(j)}_k} J_{\tilde{R}_k}(\tilde{\theta}^{(j)}_k) \right] \\ 
         & = \mb{I} + \eta \nabla^2_{\tilde{\theta}^{(j)}_k} J_{\tilde{R}_k}(\tilde{\theta}^{(j)}_k),
    \end{aligned}
\end{equation}
where $J_{\tilde{R}_k}(\tilde{\theta}_k)$ denotes the performance objective with respect to intrinsic reward $\tilde{R}_k$ and $\mb{I}$ is the identity matrix.
We use this Jacobian to enable backpropagation of the exploratory policy gradients to update the base policy as discussed in \Cref{sec:base_policy_update}.
%\Cref{eqn:the_hessian_between_parameters}
% This result directly follows from taking the derivative of \Cref{eqn:derive_exploratory_policies} with respect to the updating policy.
% However, directly computing this Jacobian is computationally expensive for large neural networks.  
 %Thus, we leverage the matrix-vector product approximation provided by \href{https://pytorch.org/tutorials/beginner/blitz/autograd_tutorial.html}{\texttt{PyTorch}}.
% This approximation reduces the computational complexity of the backpropagation step performed in \Cref{eqn:backpropagation_of_extrinsic_policy_gradient} from $\mathcal{O}(m^2)$ to $\mathcal{O}(m)$ per backward pass, where $m$ is the number of policy parameters.

\subsection{Base Policy Updates}
\label{sec:base_policy_update}
After performing $N$ updates to each exploratory policy (as described in \Cref{sec:exploratory_policy_updates}), we update the base policy parameters by estimating the IRPO gradient defined in \Cref{eqn:IRPO_gradient} and using it in the trust-region update defined in \Cref{eqn:trust_region_update}.
We denote the final exploratory policy after $N$ updates as $\tilde{\theta}_k = \tilde{\theta}_k^{(N+1)}$ for notational clarity.

\paragraph{IRPO gradient.}
% Specifically, we have the following gradient for outer-level update,
%Let $\tilde{\theta}_k = \tilde{\theta}^{(N+1)}_k$ for notational clarity. 
We define the IRPO gradient as
\begin{equation}\label{eqn:IRPO_gradient}
    \nabla J_{\text{IRPO}}(\theta, \{ \tilde{\theta}_k\}_{k\in\{1, 2,\dots,K \} } ) := \sum_{k=1}^K \omega_k \cdot \nabla_{\theta} J_R(\tilde{\theta}_k),
\end{equation}
where the weights $\omega_k$ are defined as follows:
\begin{equation}\label{eqn:definition_of_w}
    \omega_k \coloneq \frac{e^{J_R(\tilde{\theta}_k) / \tau}}{\sum_{k'=1}^K e^{J_R(\tilde{\theta}_{k'}) / \tau}},
\end{equation}
with $\tau \in (0, 1]$ being a temperature parameter.
The backpropagated gradient of each exploratory policy to the base policy is defined as
\begin{equation}\label{eqn:IRPO_backpropagated_gradient}
    \nabla_{\theta} J_R(\tilde{\theta}_k) = \mathbb{E}_{d^ {\pi_{\tilde{\theta}_k}}} \left[ Q^{\pi_{\tilde{\theta}_k}}_R(s,a) \;\nabla_\theta \log \pi_{\tilde{\theta}_k}(a \mid s) \right],
\end{equation}
where $J_R$ and $Q^\pi_R$ denote the performance objective and action-value function of policy $\pi$ with respect to extrinsic rewards $R$, respectively, and the gradient of the logarithm of policy is defined as follows:
\begin{equation}
    \nabla_\theta \log \pi_{\tilde{\theta}_k}(a \mid s) \coloneq (\nabla_{\theta} \tilde{\theta}_k)^\top \cdot \nabla_{\tilde{\theta}_k} \log \pi_{\tilde{\theta}_k}(a \mid s).
\end{equation}

%while increasingly exploiting higher-performing gradients as training progresses.

Intuitively, as $\tau \rightarrow 0_+$, updating the base policy with the IRPO gradient guides the base policy to the policy where $N$ exploratory policy updates from that policy will produce an exploratory policy that is near-optimal with respect to the extrinsic reward (see Remark \ref{thm:pseudo_optimality}).
%The softmax-based gradient aggregation computes the weighted average of those gradients from each exploratory policy, prioritizing the one that exhibits higher performance, $J_R$.
% We also illustrate how changing $\tau$ from $1$ to $0$ affects the optimization of each exploratory policy and the base policy in \Cref{fig:analysis2} (see Appendix \ref{app:empirical_analysis}).

% Intuitively, updating the base policy with the IRPO gradient guides the base policy to the policy where $N$ exploratory policy updates from that policy will produce an exploratory policy that is pseudo-optimal with respect to the extrinsic reward (the concept we introduce in Theorem \ref{thm:pseudo_optimality}).
%The softmax-based gradient aggregation computes the weighted average of those gradients from each exploratory policy, prioritizing the one that exhibits higher performance, $J_R$.
% We also illustrate how changing $\tau$ from $1$ to $0$ affects the optimization of each exploratory policy and the base policy in \Cref{fig:analysis2} (see Appendix \ref{app:empirical_analysis}).

\paragraph{Estimating the IRPO gradient with backpropagation.}
We estimate the IRPO gradient by collecting rollouts from each final exploratory policy and using those rollouts to estimate the policy gradients for each final exploratory policy with respect to the extrinsic rewards $\nabla_{\tilde{\theta}_k} J_R(\tilde{\theta}_k)$, using \Cref{eqn:policy_gradient_with_advantage}.
These gradients are then backpropagated to estimate $\nabla_{\theta} J_R(\tilde{\theta}_k)$ by applying the chain rule using the Jacobian from \Cref{sec:exploratory_policy_updates} as follows:
\begin{equation}\label{eqn:backpropagation_of_extrinsic_policy_gradient}
    \nabla_{\theta} J_R(\tilde{\theta}_k) = \left( \prod_{j=1}^{N} \nabla_{\tilde{\theta}^{(j)}_k} \tilde{\theta}^{(j+1)}_k \right)^\top
        \cdot \nabla_{\tilde{\theta}_k} J_R(\tilde{\theta}_k).
\end{equation}
We discuss the practical computation of this gradient in Appendix \ref{app_subsec:IRPO}.

\paragraph{Trust-region update.} 
% While performing gradient ascent to update the base policy using the gradient in \Cref{eqn:IRPO_gradient} is a viable approach, 
We use the following trust-region update \cite{schulman2015trust} to prevent drastic changes to the base policy:
\begin{equation}\label{eqn:trust_region_update}
    \begin{aligned}
        \theta^{(i+1)} & = \argmax_{\theta} \; \nabla J_{\text{IRPO}}^\top \cdot (\theta - \theta^{(i)}), \\
        & \textrm{s.t.} \quad D_{\text{KL}}(\pi_{\theta^{(i)}} \mid \pi_\theta) \leq \delta_{\text{KL}},
    \end{aligned}
\end{equation}
where $\nabla J_{\text{IRPO}} = \nabla J_{\text{IRPO}}(\theta^{(i)}, \{ \tilde{\theta}_k\}_{k\in\{1, 2,\dots,K \} } )$, $ D_{\text{KL}}(\cdot \mid \cdot) $ denotes the Kullback–Leibler (KL) divergence, and $\delta_{\text{KL}} > 0$ is a threshold.
Because we use the IRPO gradient, our update does not inherit the theoretical guarantees of monotonic improvement---however, our empirical results show that this update mechanism produces more stable updates than standard gradient ascent.
% our empirical results show that using this update still produces faster convergence and stability than standard gradient ascent.

%%%%%%%%%%%%%%%%%%%%%%%%%%%%%%%%%%%%%%%%%%%%%%%%%%%%%%%%%%%%%%%%%%%%%%%%%%
\subsection{Formal Analysis of IRPO}\label{sec:theoretical_analysis}
%%%%%%%%%%%%%%%%%%%%%%%%%%%%%%%%%%%%%%%%%%%%%%%%%%%%%%%%%%%%%%%%%%%%%%%%%%
We first show that the true policy gradient vanishes in a sparse-reward setting, motivating the use of the IRPO gradient.
% We show that the true policy gradient in a sparse-reward setting vanishes and prove that the IRPO update leads to pseudo-optimality. 
% We begin by listing our assumptions, with full proofs provided in Appendix \ref{app:theoretical_analysis} for completeness.
We then formally define the optimization problem that IRPO solves to give insight into how IRPO works and the class of problems for which IRPO can obtain optimality.
Proofs are provided in Appendix \ref{app:theoretical_analysis}.

% \paragraph{Vanishing policy gradient in sparse-reward settings.}

%highlighting a limitation of naive policy gradient methods in sparse-reward settings.
% ---specifically, as the reward sparsity parameter $\epsilon$ in \Cref{eqn:non-zero_reward_probability} approaches zero.
% \todo{is it ok to not have epsilon?}
\begin{corollary}[Vanishing Policy Gradient in Sparse-Reward Settings] \label{cor:vanishment_of_policy_gradient}
     Assume a sparse-reward setting as in Assumption \ref{assump:sparse_rewards}, a bounded policy log-gradient as in Assumption \ref{assump:bound_on_the_policy_log_gradient}, and a discount factor $ \gamma \in [0, 1) $.
    Then, the $\ell_2$-norm of the policy gradient of any stochastic policy $\pi_\theta$ approaches zero as the sparsity of rewards increases as follows:
    \begin{equation}
        \left\| \nabla_\theta J(\theta) \right\|_2 \longrightarrow 0 \quad \text{as } \epsilon \to 0.
    \end{equation}
\end{corollary}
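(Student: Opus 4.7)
The plan is to start from the variance-reduced (or original) policy gradient expression in \Cref{eqn:policy_gradient_with_advantage} and bound its norm by pushing the norm inside the expectation. Specifically, I would write
\begin{equation*}
    \left\| \nabla_\theta J(\theta) \right\|_2 \leq \mathbb{E}_{d^{\pi_\theta}}\!\left[\, \left| Q^{\pi_\theta}(s,a) \right| \cdot \left\| \nabla_\theta \log \pi_\theta(a \mid s) \right\|_2 \right],
\end{equation*}
using Jensen's inequality on the convex norm. This reduces the problem to controlling $|Q^{\pi_\theta}(s,a)|$ in terms of the sparsity parameter $\epsilon$, since the log-gradient factor is uniformly bounded by Assumption \ref{assump:bound_on_the_policy_log_gradient}.

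The next step is to show that $|Q^{\pi_\theta}(s,a)|$ vanishes as $\epsilon \to 0$. Under the expected reading of the sparsity assumption (that the probability under any trajectory of encountering a non-zero reward is controlled by $\epsilon$, or equivalently that the measure of the rewarding state-action set is $O(\epsilon)$), I would expand $Q^{\pi_\theta}(s,a) = \mathbb{E}_{\pi_\theta}[\sum_{l=0}^\infty \gamma^l r_{t+l} \mid s_t=s, a_t=a]$ and split each term according to whether the visited $(s_{t+l}, a_{t+l})$ is rewarding. Since rewards are bounded by $R_{\max}$ and occur with probability at most $\epsilon$ at each step, each term is bounded by $\gamma^l \epsilon R_{\max}$, and summing the geometric series using $\gamma \in [0,1)$ gives $|Q^{\pi_\theta}(s,a)| \leq \epsilon R_{\max} / (1-\gamma)$.

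Combining the two bounds yields $\|\nabla_\theta J(\theta)\|_2 \leq C \cdot \epsilon$ for a constant $C$ depending on $R_{\max}$, $\gamma$, and the uniform log-gradient bound, which drives the right-hand side to zero as $\epsilon \to 0$ and establishes the claim.

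The main obstacle is aligning the argument with the precise formalization of Assumption \ref{assump:sparse_rewards}: if sparsity is phrased in terms of state-visitation measure rather than per-step reward probability, then I would need to additionally argue (e.g., via a union bound over the infinite horizon weighted by $\gamma^l$, or via the discounted state-occupancy measure $d^{\pi_\theta}$) that the expected discounted return is still $O(\epsilon)$. The secondary subtlety is ensuring that the constant $C$ is independent of $\theta$, which follows directly from the uniform-in-$\theta$ log-gradient bound in Assumption \ref{assump:bound_on_the_policy_log_gradient} but should be stated explicitly so that the conclusion holds for "any stochastic policy" as claimed.
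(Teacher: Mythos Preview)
Your approach matches the paper's: Jensen on the norm, the uniform log-gradient bound $\kappa$, and a bound on the $Q$-term via the sparsity assumption and a geometric sum, yielding $\|\nabla_\theta J(\theta)\|_2 \leq \epsilon R_{\max}\kappa/(1-\gamma) \to 0$. The only adjustment is that under Assumption~\ref{assump:sparse_rewards} (which is stated via $d^{\pi}$, not per-step trajectory probabilities) the pointwise bound $|Q^{\pi_\theta}(s,a)| \leq \epsilon R_{\max}/(1-\gamma)$ is not available; the paper instead bounds only the expectation $\mathbb{E}_{d^{\pi_\theta}}[Q^{\pi_\theta}(s,a)]$ (Lemma~\ref{lemma:bound_on_action_value_function}), which suffices after using $Q \geq 0$ to drop the absolute value and pulling out $\kappa$---precisely the $d^{\pi_\theta}$-based fallback you flag in your obstacle paragraph.
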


% \paragraph{Pseudo-optimality of IRPO. }
We now define the set of policies that IRPO searches over.
% assuming the temperature parameter within the IRPO gradient (\Cref{eqn:definition_of_w}) approaches zero as $\tau \rightarrow 0_+$.
% We justify this by noting that we anneal the temperature to transition from exploration to exploitation of gradients (see \Cref{sec:base_policy_update}).
% We first introduce the following definitions for reachable exploratory policies and a pseudo-optimal policy.
\begin{definition}[Reachable Exploratory Policies]
    \label{def:reachable_policy_set}
    Let $\theta \in \mathbb{R}^m$ be the parameters for a stochastic (base) policy $\pi_\theta$ and $\{\tilde{R}_k \}_{k=1}^K$ be any set of $K$ intrinsic reward functions.
    For each intrinsic reward function, assume we initialize an exploratory policy from the base policy and perform $N$ policy gradient updates, as described in \Cref{sec:exploratory_policy_updates}.
    We define the set of reachable exploratory policy parameters $\tilde{\Theta}_N (\theta)$ for base policy $\pi_\theta$ as follows:
    % Let the reachable set of exploratory policy parameters from any base policy parameters $\theta \in \mathbb{R}^m$ using $K$ intrinsic reward functions $\{\tilde{R}_k \}_{k=1}^K$ for $N$ updates be
    \begin{equation}\label{eqn:the_definition_of_Theta}
        \begin{aligned}
            & \tilde{\Theta}_N (\theta) \coloneq \\ 
            &\left\{ \tilde{\theta}^{(N+1)}_k \;\Bigg|\; 
        \begin{aligned}
            &\tilde{\theta}^{(N+1)}_k = \tilde{\theta}_k^{(1)} + \eta \sum_{j=1}^{N} \nabla_{\tilde{\theta}_k^{(j)}} J_{\tilde{R}_k}(\tilde{\theta}_k^{(j)}), \\
            & \tilde{\theta}_k^{(1)} = \theta, \quad k=1, \dots, K
        \end{aligned}
        \right\}.
        \end{aligned}
    \end{equation}

    % Then, we define the reachable set of exploratory policy parameters for all possible base policies as follows:
    % \begin{equation}
        % \tilde{\Gamma}_N = \bigcup_{\theta \in \mb{R}^m} \tilde{\Theta}_N(\theta).
    % \end{equation}
\end{definition}

\begin{definition}[All Reachable Exploratory Policies]\label{def:set_of_rechable_policy_set}
    Given Definition \ref{def:reachable_policy_set}, we define the set of reachable exploratory policies $\tilde{\Gamma}_N$ for all possible base policies as follows:
    \begin{equation}
        \tilde{\Gamma}_N \coloneq \bigcup_{\theta \in \mb{R}^m} \tilde{\Theta}_N(\theta).
    \end{equation}
\end{definition}

\begin{remark}[Optimization Problem of IRPO]\label{thm:pseudo_optimality}
Let $\{\tilde{R}_k \}_{k=1}^K$ be a set of intrinsic reward functions, $N$ be the number of exploratory policy updates, and $\theta \in \mathbb{R}^m$ be the parameters for a stochastic (base) policy $\pi_\theta$.
Assume IRPO is implemented such that $\tau$ is annealed to zero over the course of training (i.e., $\tau \rightarrow 0_+$).

Then, $\theta^\dagger$ are the resulting base policy parameters from IRPO, such that
\begin{equation}\label{eqn:IRPO_algorithm_base_policy}
    \begin{aligned}
        \theta^\dagger =&  \argmax_{\theta \in \mathbb{R}^m} \;\max_{k=1,2,\dots,K} J_R (\tilde{\theta}_{k}^{(N+1)}),\\
    \end{aligned}
\end{equation}
where $\tilde{\theta}^{(N+1)}_k$ are the exploratory policy parameters for intrinsic reward $\tilde{R}_k$ updated from base policy parameters $\theta$, such that
\begin{equation}\label{eqn:definition_of_theta_tilde_k}
    \tilde{\theta}^{(N+1)}_k = \theta + \eta \sum_{j=1}^{N} \nabla_{\tilde{\theta}_k^{(j)}} J_{\tilde{R}_k}(\tilde{\theta}_k^{(j)}), \; \forall k = 1, 2, \dots, K.
\end{equation}
% and $\tilde{\Theta}_N(\theta)$ is the set of reachable exploratory policies from a base policy $\pi_\theta$ as defined in Definition \ref{def:reachable_policy_set}.

Let $q$ be the index of the exploratory policy that maximizes the performance objective from $\theta^\dagger$, such that
\begin{equation}
    q = \argmax_{k=1,2,\dots, K} J_R (\tilde{\theta}_{k, \dagger}^{(N+1)}),
\end{equation}
where 
\begin{equation}
    \tilde{\theta}^{(N+1)}_{k, \dagger} = \theta^\dagger + \eta \sum_{j=1}^{N} \nabla_{\tilde{\theta}_k^{(j)}} J_{\tilde{R}_k}(\tilde{\theta}_k^{(j)}), \; \forall k = 1, 2, \dots, K.
\end{equation}

Then, the output policy parameters of IRPO, $\tilde{\theta}^{(N+1)}$, satisfy the following:
\begin{equation}\label{eqn:pseudo_optimal_policy}
    \tilde{\theta}^{(N+1)} = \argmax_{\tilde{\theta} \in \tilde{\Gamma}_N} J_R(\tilde{\theta}),
\end{equation}
where $\tilde{\Gamma}_N$ is defined in Definition \ref{def:set_of_rechable_policy_set}.
\end{remark}

% We now establish the pseudo-optimality of IRPO.
% \begin{remark}[Pseudo-Optimality of IRPO]\label{thm:pseudo_optimality}
% Given a set of intrinsic reward functions $\{\tilde{R}_k \}_{k=1}^K$, let $\tilde{\theta}^{(N+1)}$ be the output policy parameters from implementing IRPO with $\tau$ annealed over the course of training such that $\tau \rightarrow 0_+$.

% Then, $\tilde{\theta}^{(N+1)}$ is a pseudo-optimal policy, such that
% \begin{equation}\label{eqn:pseudo_optimal_policy}
%     \tilde{\theta}^{(N+1)} = \argmax_{\tilde{\theta} \in \tilde{\Gamma}_N} J_R(\tilde{\theta}),
% \end{equation}
% where $\tilde{\Gamma}_N$ is defined in Definition \ref{def:set_of_rechable_policy_set} and $N$ defines the number of exploratory policy updates used for each base policy update.
% \end{remark}

If we assume that the parameters for an optimal policy $\theta^\star$ are within the set of all reachable exploratory policies, that is $\theta^\star \in \tilde{\Gamma}_N$, then IRPO can obtain optimality. 
This assumption may hold, for example, when we have a large set of intrinsic rewards and a small number of exploratory updates $N$.
% These conditions increase the likelihood that $\theta^\star \in \tilde{\Gamma}_N$ by ensuring a denser construction of the reachable set $\tilde{\Gamma}_N$.
% Connect it with arbitrary intrinsic rewards ablations and N ablations.
% Discovering the appropriately defined intrinsic rewards are not difficult, as Laplacian-based intrinsic rewards used in our experiments can be easily learned via supervised learning. 
However, using a large number of intrinsic reward functions increases sample complexity, and decreasing $N$ reduces exploration, which could hurt performance in complex environments.
Despite this apparent contradiction, we empirically show in \Cref{sec:experiments} that IRPO can obtain optimality in our considered discrete environments and near-optimality in our considered continuous environments (we use $N=5$ in our experiments).
% However, we note that decreasing $N$ reduces exploration, which could hurt performance in complex environments.
% Despite this apparent contradiction, we empirically show in \Cref{sec:experiments} that IRPO can achieve optimality in our considered discrete environments and near-optimality in our considered continuous environments (we use $N=5$ in our experiments).

\begin{figure}
    \centering
    \includegraphics[width=1.0\linewidth]{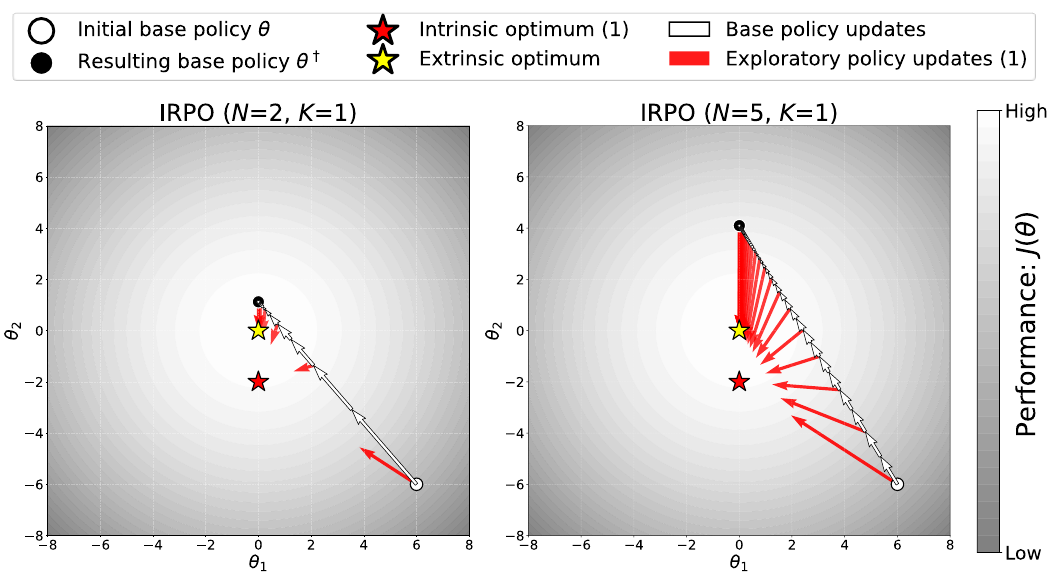}
    \caption{We show the update mechanism of IRPO with a single intrinsic reward (i.e., $K = 1$). We plot the results for different numbers of exploratory policy updates, $N=2$ and $N=5$, respectively, with extrinsic performance plotted on a contour.}
    \label{fig:analysis1}
\end{figure}

\subsection{Empirical Analysis of IRPO}
We provide a visual illustration of IRPO's update mechanism in \Cref{fig:analysis1}. This example assumes a two-dimensional parameter space $\theta = [\theta_1, \theta_2]^\top \in \mathbb{R}^2$, with the following (strictly concave) extrinsic and intrinsic performance objectives:
\begin{equation}\label{eqn:ext_and_int_objectives}
    J(\theta) \coloneq -\| \theta \|^2_2, \quad 
    \tilde{J}_1(\theta) \coloneq -\left\| \theta - \begin{bmatrix} 0 \\ -2 \end{bmatrix} \right\|^2_2.
\end{equation}
% with their corresponding optima as follows:
% \begin{equation}
    % \argmax_\theta J(\theta) = \begin{bmatrix} 0 \\ 0 \end{bmatrix}, \quad 
    % \argmax_\theta \tilde{J}_1(\theta) = \begin{bmatrix} 0 \\ -2 \end{bmatrix}.
% \end{equation}
% We then discuss the impact of the number of exploratory policy updates in a single-intrinsic-reward setting, followed by an analysis of the temperature parameter's effect in a multi-intrinsic-reward setting.
As expected, the exploratory policy updates (red arrows) are directed towards the intrinsic performance objective (red star).
The base policy updates---driven by the IRPO gradient---guide the base policy toward a region where $N$ exploratory updates (with respect to the intrinsic objective) will land at the extrinsic optimum.
We also see that increasing the number of exploratory policy updates $N$ increases exploration (indicated by the length of red arrows). 
% However, selecting an appropriate number of updates is crucial, as our experiments demonstrate that too few exploratory updates lead to insufficient exploration (\Cref{fig:ablation}).
We provide similar analysis for multiple intrinsic rewards with varying temperature parameters $\tau$ in Appendix \ref{app:empirical_analysis}.
% and vanishing policy gradients (Corollary \ref{cor:vanishment_of_policy_gradient}). 

%%%%%%%%%%%%%%%%%%%%%%%%%%%%%%%%%%%%%%%%%%%%%%%%%%%%%%%%%%%%%%%%%%%%%%%%%%
\section{Experiments}\label{sec:experiments}
%%%%%%%%%%%%%%%%%%%%%%%%%%%%%%%%%%%%%%%%%%%%%%%%%%%%%%%%%%%%%%%%%%%%%%%%%%
We empirically compare IRPO to relevant baselines in discrete and continuous sparse-reward environments used in the literature \citet{machado2017laplacian, wu2018laplacian, wang2021towards, yang2024exploration}.
% , including ablations to provide deeper insight into the individual components of IRPO.
% \paragraph{Environments.}
% These environments are implemented using \href{https://gymnasium.farama.org/index.html}{\texttt{Gymnasium}} and are visualized in \Cref{fig:env}. 
Our environments are visualized in \Cref{fig:env} and detailed in Appendix \ref{app:environmental_parameters}.

\begin{figure}[t]
    \centering
    \includegraphics[width=0.9\linewidth]{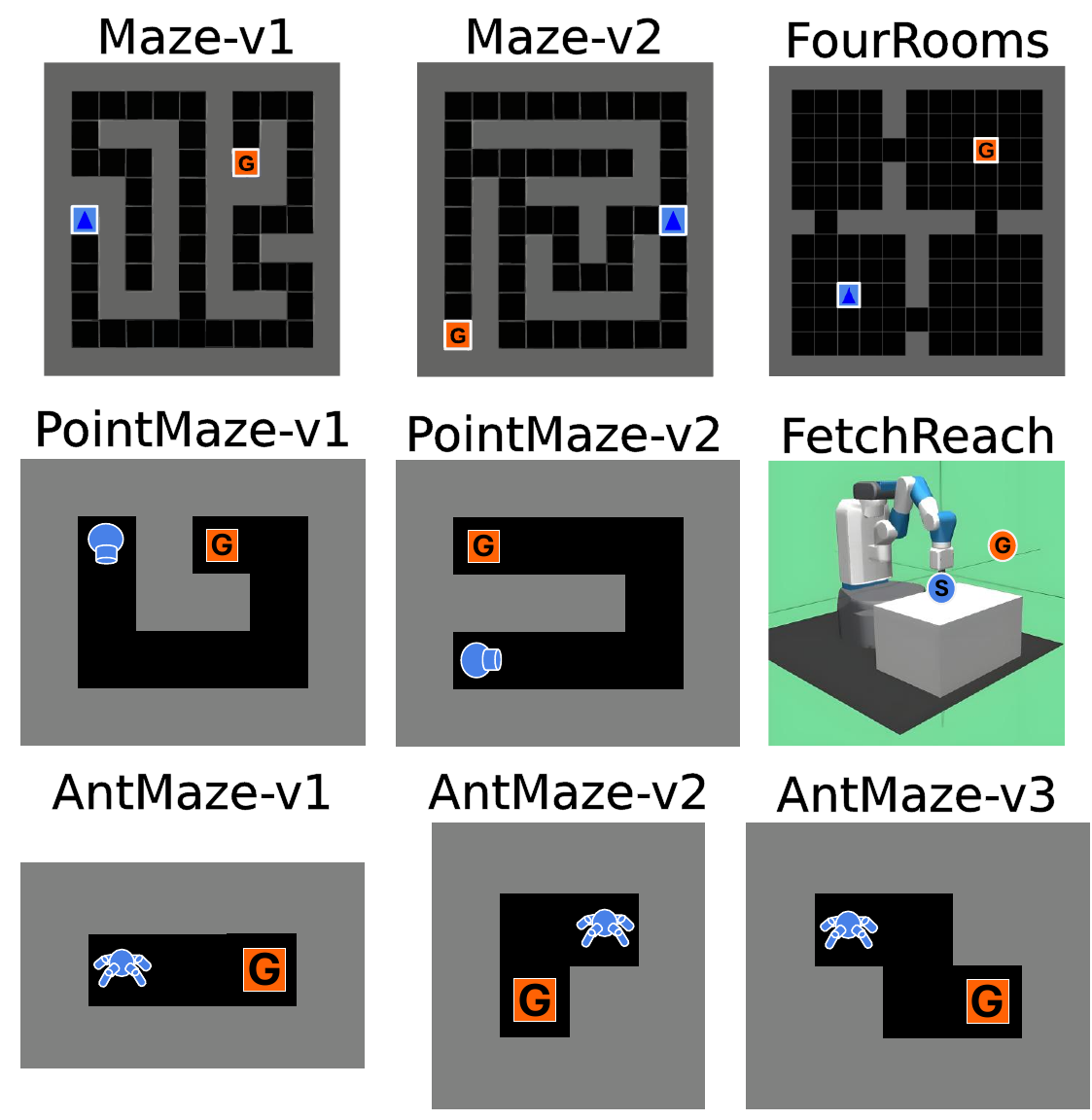}
    \caption{The environments used in our experiments. We consider three discrete environments (top row) and six continuous ones (middle and bottom rows).}
    \label{fig:env}
\end{figure}
% Following \Cref{sec:theoretical_analysis}, we extend the analysis to sparse-reward RL settings to

\begin{figure*}[t]
    \centering
    \includegraphics[width=0.99\linewidth]{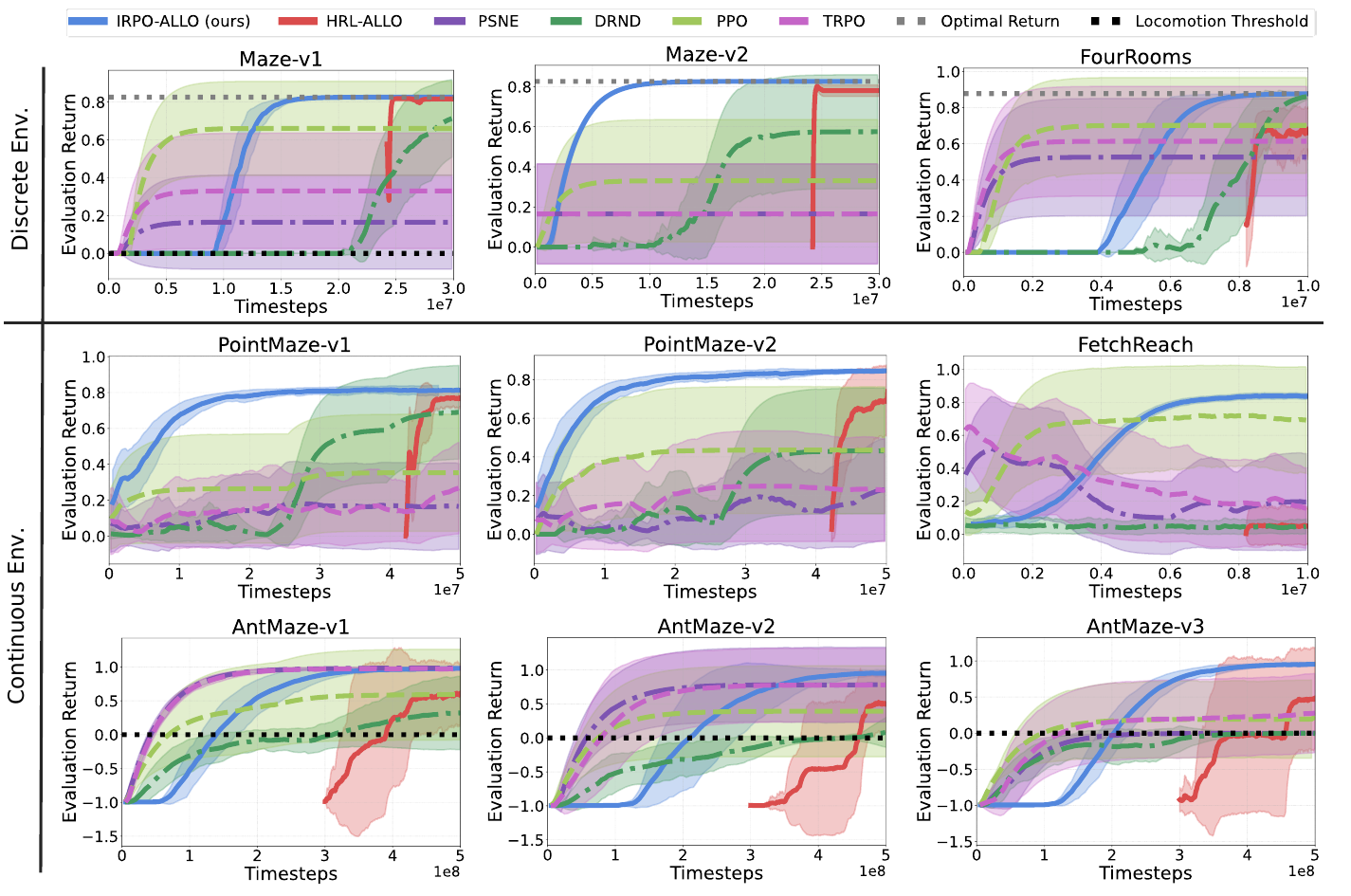}
    \caption{Learning curves for environments in \Cref{fig:env}, with mean and 95\% confidence intervals over 10 random seeds. Learning curves for IRPO-ALLO (ours) and HRL-ALLO include samples needed to derive intrinsic rewards (both) and to pretrain subpolicies (HRL-ALLO only). The locomotion threshold in AntMaze environments refers to the stage where the agent starts learning to walk.
    }
    \label{fig:data}
\end{figure*}

% \paragraph{Baseline algorithms.}
Our baselines consider canonical RL algorithms designed for effective exploration: 
hierarchical RL (HRL) \cite{sutton1999between}, distributional random network distillation (DRND) \cite{yang2024exploration}, parameter space noise for exploration (PSNE) \cite{plappert2018parameter}, proximal policy optimization (PPO) \cite{schulman2017proximal}, and trust region policy optimization (TRPO) \cite{schulman2015trust}.
% HRL employs exploratory subpolicies, DRND leverages statistical model uncertainty, and PSNE utilizes parameter space noise. In contrast, PPO relies on standard action distribution uncertainty, whereas TRPO does not use any auxiliary exploration mechanism.
% We note that DRND is based on PPO, while PSNE uses TRPO as its underlying algorithm.
We provide algorithm details in Appendix \ref{app:network_parameters}.

% on the implementation of baseline algorithms, their codebase, and algorithmic parameters

For IRPO and HRL, we use one set of intrinsic rewards derived from the augmented Lagrangian Laplacian objective (ALLO) \cite{gomez2023proper}, which maximizes agent diffusion within an environment, across all experiments.
% We use these rewards for IRPO (IRPO-ALLO) and HRL (HRL-ALLO).
% We refer to these implementations as IRPO-ALLO and HRL-ALLO.
We visualize these intrinsic rewards in Appendix \ref{app:intrinsic_reward_maps}.

\subsection{Main Results}\label{subsec:performance_analysis}

\Cref{fig:data} shows learning curves for each environment. 
To ensure a fair comparison, all samples used to generate ALLO-based intrinsic rewards, pretrain subpolicies, and perform exploratory policy updates are included in the learning curves of relevant algorithms.
% For HRL-ALLO, the number of samples for learning each subpolicy was set to the maximum required among them to reach the saturation of performance with respect to the corresponding intrinsic rewards. For fairness, IRPO-ALLO also accounts for all samples used in exploratory policy updates. 
% Key observations from our experiments are summarized below.

% \paragraph{Exploration and credit assignment.}
% mention what factors are important for performance in sparse-reward environments.
% In sparse-reward settings, an algorithm’s ability to explore and assign credit is a primary determinant of performance, as failure in either aspect leads to poor performance.

%%% Exploration
% details of why IRPO is good
% We see that IRPO achieves high performance with a narrow confidence interval across all environments.
We see that IRPO consistently attains the highest converged performance and maintains narrow confidence intervals in all environments except \emph{FourRooms} and \emph{AntMaze-v1}, where one or two baselines achieve similar results.
Such strong performance indicates that IRPO provides sufficient exploration (via exploratory policy updates) to find extrinsic rewards and assign credit well (via base policy updates).
% details of why HRL is good
Similarly, HRL-ALLO demonstrates effective exploration and credit assignment, outperforming other baselines in more than half of the environments (\emph{Maze-v1, Maze-v2, PointMaze-v1, PointMaze-v2, and AntMaze-v3}), although it consistently underperforms relative to IRPO.
Notably, in \emph{FetchReach}, HRL-ALLO completely fails to learn, while IRPO achieves strong performance using the same intrinsic rewards.
% This is because of the inherent sub-optimality introduced by the use of temporally-extended actions.
% This highlights the benefit of IRPO in avoiding sub-optimality by operating at a low-level decision-making without temporally extended actions.
This result aligns with a well-known limitation of hierarchical RL, where the use of temporally extended actions can restrict fine-grained decision-making \cite{sutton1999between}. 
% explain why our is better than HRL
% However, HRL-ALLO consistently underperforms relative to IRPO, which we attribute to the inherent sub-optimality introduced by the use of temporally-extended actions.
% Now, talk about other baselines
% Additionally, we observe that the other baselines underperform relative to IRPO across most environments. 
PSNE and TRPO---which directly optimize extrinsic rewards---struggle in most settings except \emph{AntMaze-v1 and AntMaze-v2}, which we attribute to insufficient exploration.
PPO performs moderately well, but is still significantly outperformed by IRPO.
DRND---which builds upon PPO---yields negligible improvements to PPO in all discrete environments and \emph{PointMaze-v1}, and completely fails to learn in \emph{FetchReach}. We attribute this failure to the difficulty of credit assignment when optimizing the combined sum of rewards.

\begin{figure*}[t!]
    \centering
    \includegraphics[width=0.99\linewidth]{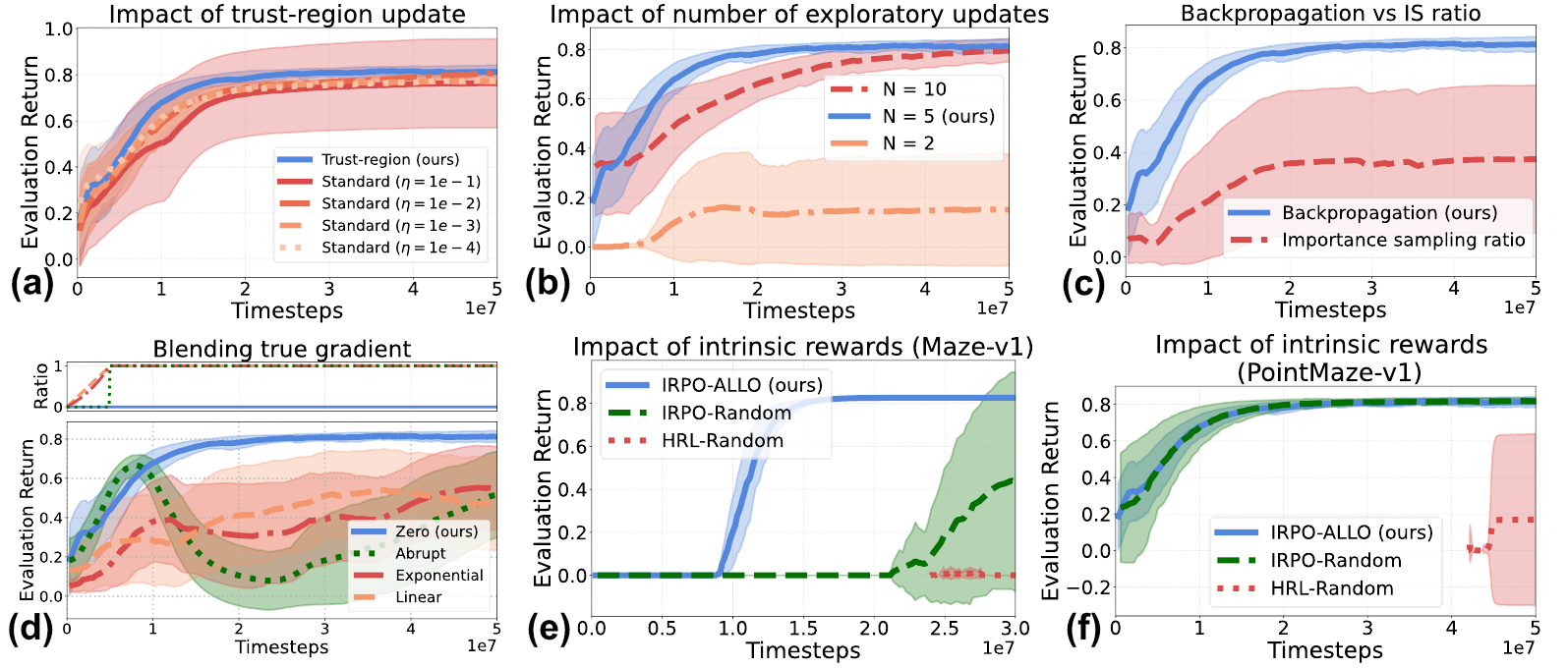}
    \caption{% Ablation studies conducted in \emph{PointMaze-v1} environment.
    %, examining: (a) the impact of the trust-region update; (b) the advantage of softmax-based gradient aggregation; (c) the impact of the number of inner-level updates; and finally (d) the benefits of backpropagation-based approach over importance sampling ratio to correct the deviation between the exploratory policy and the base policy. 
    Learning curves for our ablation studies, with mean and 95\% confidence intervals over 10 random seeds.}
    \label{fig:ablation}
\end{figure*}

% \paragraph{Sample complexity.}
% We are efficient than HRL and reiterate samples included in the learning curve.
% We also see that IRPO generally shows lower sample complexity than baselines across all environments except \emph{AntMaze-v1} and \emph{AntMaze-v2
Regarding sample complexity, we see that IRPO achieves lower sample complexity than HRL-ALLO across all environments, due to the fact that HRL-ALLO requires pretraining a subpolicy for each intrinsic reward. 
% However, we acknowledge that these findings may be specific to our considered environmental settings.
% We are efficient than DRND
IRPO also shows lower sample complexity than DRND across all environments, likely due to DRND struggling to optimize its continuously changing intrinsic rewards.
PSNE, PPO, and TRPO show lower sample complexity than IRPO in several environments, including \emph{Maze-v1}, \emph{FourRooms}, \emph{FetchReach}, and \emph{AntMaze} environments.
This is because these baselines do not rely on complex exploration strategies that require additional samples (e.g., pretraining subpolicies or exploratory updates).
However, while PSNE, PPO, and TRPO show lower sample complexity than IRPO in \emph{Maze-v1}, \emph{FourRooms}, and \emph{FetchReach}, their converged performance is significantly lower than IRPO's.
Furthermore, as we increase the maze complexity in the \emph{AntMaze} environments (from v1 to v3), the converged performance and variance of these baselines also degrades, whereas IRPO maintains strong performance.
% We are not efficient than other baselines but better in performance
We also compare the overall wall-clock time of IRPO against PPO in Appendix \ref{app:computational_time}.

%Algorithms that have more techniques to enhance performance tend to require more samples. 
%Specifically, 
% HRL-ALLO exhibits higher sample complexity as it requires the pretraining of subpolicies.
% Similarly, DRND requires more samples than PPO to converge across all environments. 
% In contrast, IRPO---using the same intrinsic rewards as HRL-ALLO---achieves faster convergence than HRL-ALLO in discrete environments and demonstrates the fastest convergence with near-optimal performance in continuous environments.

% Even though IRPO employs the same intrinsic rewards as HRL-ALLO and involves a bi-level optimization procedure, it

% \paragraph{Sub-optimality.}
% 
Finally, although the search space of IRPO is limited (see \Cref{eqn:pseudo_optimal_policy} in Remark \ref{thm:pseudo_optimality}), our experiments demonstrate that it achieves optimality in discrete environments and near-optimality in continuous environments. 
This result suggests that our hyperparameters (i.e., $N=5$ exploratory updates and $K$ intrinsic rewards), combined with the expressivity of neural networks, satisfy the conditions necessary to recover true optimality (see discussion following Remark \ref{thm:pseudo_optimality}).
% talk about pseudo-optimality of ours by saying optimality in discrete and near-optimality.
% Additionally, we see that HRL-ALLO consistently underperforms and exhibits high variance in performance relative to IRPO because of the inherent sub-optimality introduced by the use of temporally-extended actions.
% Notably, in \emph{FetchReach}, HRL-ALLO fails almost completely, whereas IRPO achieves strong performance under identical conditions using the same intrinsic rewards.
% This highlights the benefit of IRPO in avoiding sub-optimality by operating at a low-level decision-making without temporally extended actions.
% This aligns with a well-known limitation of hierarchical RL, where the use of temporally extended actions can restrict fine-grained decision-making \cite{sutton1999between}. 
% In contrast, IRPO avoids this issue.

% However, HRL-ALLO consistently underperforms relative to IRPO, which we attribute to the inherent sub-optimality introduced by the use of temporally-extended actions.

\subsection{Ablation Study}
% We present ablation studies to justify our algorithmic components in \emph{PointMaze-v1} environment for 10 random seeds in \Cref{fig:ablation}.
\Cref{fig:ablation} shows learning curves for our ablation study.
% We demonstrate the benefits of the trust-region update (\Cref{eqn:trust_region_update}), analyze the impact of the number of exploratory policy updates ($N$), and show the benefits of using backpropagation for base policy updates.
% Additionally, we examine the impact of blending gradients (transitioning from the IRPO gradient to the true policy gradient to recover optimality) and the impact of intrinsic reward quality. These evaluations are conducted in the continuous \emph{PointMaze-v1} environment for 10 seeds.
% \paragraph{Impact of trust-region update.}
% high lr is bad 
% given using parametrized policy, using KL update result in model-invariant update
We compare our trust-region update against standard gradient updates with various learning rates in \Cref{fig:ablation}a.
We see that the trust-region update significantly reduces observed confidence intervals relative to standard updates, while also achieving slightly faster and higher converged performance.
% As shown in \Cref{fig:ablation}a, the trust-region update achieves higher returns and converges faster than the baselines, although the difference is slight. 
% Crucially, however, standard gradient updates with high learning rates exhibit significant variance, necessitating careful hyperparameter tuning. % that IRPO avoids.

% The trust-region update demonstrates substantial benefits by showing faster convergence and a narrower confidence interval than standard updates.

% \paragraph{Impact of gradient aggregation.}
% We evaluated argmax-based and softmax-based gradient aggregation in \Cref{fig:ablation}b.
% Apparently, softmax-based aggregation outperforms, yielding a higher mean curve and narrower confidence interval.
%, consistent with our analysis in \Cref{fig:analysis}.
% While both aggregation methods share similarity in prioritizing the gradient of the exploratory policy with a higher extrinsic return, we claim that softmax-based aggregation has more flexibility in using other gradients that empirically tighten the sub-optimality bound introduced in Theorem \ref{thm:pseudo_optimality}. We illustrate this in  \Cref{fig:analysis}d (Appendix \ref{app:empirical_analysis}).

% \paragraph{Impact of exploratory policy updates.}
\Cref{fig:ablation}b shows the impact of the number of exploratory policy updates $N$ (which controls exploration) on performance.
We see that $N = 2$ results in poor converged performance, likely due to insufficient exploration, while $N = 10$ results in poor sample efficiency with no improvement in converged performance. 
% Since the number of exploratory policy updates $N$ determines the degree of exploration in IRPO, we additionally tested $N = 2$ and $N = 10$, as $N=5$ was used in our main experiment. 
% As shown in \Cref{fig:ablation}b, $N = 2$ yields only a slight performance gain, which we attribute to insufficient exploration. 
% On the other hand, $N = 10$ initially exhibits higher variance and slows the training as it requires more samples for exploratory policy updates. 
This result suggests IRPO requires tuning to find a value for $N$ that effectively balances exploration and sample complexity.
% These results highlight the importance of carefully selecting an appropriate value for $N$.

% \paragraph{Backpropagation over an importance sampling ratio.}
\Cref{fig:ablation}c compares the use of our backpropagated IRPO gradient with a naive importance sampling (IS) alternative.
In principle, IS can be used to provide an unbiased update for the base policy based on exploratory policy samples---our IRPO gradient is instead biased.
We test this idea by using the following IS-based policy gradient:
\begin{equation}\label{eqn:unbiased_IRPO_gradient}
    \begin{aligned}
        \nabla_{\theta} J_R(\theta) \approx \mathbb{E}_{d^{\pi_{\tilde{\theta}_k}}} \biggl[ \rho(s,a) \; Q^{\pi_{\tilde{\theta}_k}}_R(s, a) \; \nabla_\theta \log \pi_\theta(a\mid s) \biggr],
    \end{aligned}
\end{equation}
where $\rho(s,a) \coloneq \pi_\theta(a \mid s)/\pi_{\tilde{\theta}_k}(a \mid s)$ denotes the IS ratio.
We use this gradient to replace \Cref{eqn:IRPO_backpropagated_gradient} in \Cref{eqn:IRPO_gradient} for updating a base policy and use the base policy for evaluation.
The approximation sign arises from neglecting the discounted state occupancy correction term (i.e., $d^{\pi_\theta}(s)/d^{\pi_{\tilde{\theta}_k}}(s)$)---we neglect this term because estimating it requires training additional models---and from using the critic of the exploratory policy to mitigate vanishing gradients as highlighted in Corollary \ref{cor:vanishment_of_policy_gradient}.
We empirically saw that using the critic of the exploratory policy resulted in better performance than using the critic of the base policy.
We see that using the IS-based policy gradient results in poor converged performance and high confidence intervals---this result is not surprising, given that prior work has shown IS corrections between substantially different policies can introduce high variance into policy gradient estimates \cite{sutton1998reinforcement}.

While IRPO does not theoretically produce the optimal policy (see Remark \ref{thm:pseudo_optimality}), one way to recover optimality is to transition to using the true policy gradient after the base policy begins to receive positive rewards.
\Cref{fig:ablation}d shows results for abrupt, exponential, and linear blending of the IRPO gradient with the true gradient during training---we see that all transitions reduce converged performance and increase training instability and variance.
One possible reason for this result is that the two gradients do not have the same optimization landscapes due to the bias of the IRPO gradient.
% the  two gradients may be misaligned in the nonlinear parameter space since they use different domains (i.e., $\mb{R}^m$ and $\tilde{\Gamma}_N)$ for policy optimization.
% One possible reason for this result is that the two gradients may be misaligned in the nonlinear parameter space since they use different domains (i.e., $\mb{R}^m$ and $\tilde{\Gamma}_N)$ for policy optimization.
% We carefully assume this is because the high-dimensional nonlinear policy parameter space for each the IRPO gradient and the true gradient are not sharable, as the domains in policy optimizations differ.
% with  may get trapped in a bad local optimum when trained with the IRPO gradient.
% We leave the recovery of theoretical and empirical optimality as a direction for future work.

% \paragraph{Impact of intrinsic rewards.}
Finally, we evaluated the sensitivity of IRPO to the quality of the intrinsic rewards used by considering random intrinsic rewards.
We generated random rewards by using a randomly initialized neural network to map states to $K$-dimensional vectors---we visualize the resulting rewards in Appendix \ref{app:intrinsic_reward_maps}.
% our main experiments use  Laplacian-based intrinsic rewards, as they are well-established for sparse-reward navigation \cite{machado2017laplacian,machado2017eigenoption,gomez2023proper}. 
% To evaluate the sensitivity of IRPO to the quality of intrinsic rewards, we introduce random rewards that are generated by mapping states to $K$-dimensional vectors using a randomly initialized neural network (visualized in Appendix \Cref{fig:visualization_of_ALLO}).
\Cref{fig:ablation}e shows that IRPO's performance decreases with random intrinsic rewards in \emph{Maze-v1}, though it still significantly outperforms the HRL baseline using those random rewards.
\Cref{fig:ablation}f shows that IRPO achieves similar performance with random intrinsic rewards in \emph{PointMaze-v1} (though with more variance), and still significantly outperforms the comparable HRL baseline, suggesting it is somewhat robust to the intrinsic rewards used.
% We use these random intrinsic rewards to evaluate the performance of IRPO compared to HRL (denoted as IRPO-Random and HRL-Random) in \emph{Maze-v1} and \emph{PointMaze-v1}.
% Our results show that IRPO-Random is more robust to the quality of intrinsic rewards than HRL-Random in both environments (\Cref{fig:ablation}ef).
% While we observe that IRPO-Random underperforms under IRPO-ALLO in the \emph{Maze-v1} environment, its performance is comparable to IRPO-ALLO in \emph{PointMaze-v1}, albeit with higher initial variance.

% As shown in \Cref{fig:ablation}ef, IRPO-Random clearly underperforms IRPO-ALLO but it is clear that given same intri partially solves  successfully solves the environment even with these random rewards as in our main experiments. 
% In contrast, the HRL baseline with the same intrinsic rewards fails to learn. 
% This result demonstrates that IRPO is significantly more robust to the quality of intrinsic rewards.

% \todo{work on it}

\section{Related Works}\label{sec:related_works}
% \paragraph{Parameter noise injections for direct exploration.}
Parameter noise injection \cite{fortunato2017noisy,plappert2018parameter} has been used to find a more consistent and structured way to collect diverse experiences, in contrast to action noise injection \cite{ladosz2022exploration}.
%Previous research has incorporated such an idea into various RL algorithms, including DQN \cite{osband2016deep}, DDPG \cite{lillicrap2015continuous}, and TRPO \cite{schulman2015trust}, as demonstrated in \cite{fortunato2017noisy, plappert2018parameter}. 
However, this approach often struggles in sparse-reward environments and scales poorly with neural network size \cite{plappert2018parameter}.
Furthermore, the additional randomness introduced by noise injection in policy gradient estimation could introduce extra variance to the gradient.

% \paragraph{Methods with uncertainty-based rewards.}
% Several works leverage statistical properties to encourage exploration. 
Several works use uncertainty-based intrinsic rewards to drive exploration by optimizing a policy with the sum of intrinsic rewards and an extrinsic reward.
% While using the entropy of the action distribution of the policy as intrinsic rewards is a common choice, the other
One approach uses the uncertainty of the neural network \cite{ladosz2022exploration}. 
For example, \citet{burda2018exploration, yang2024exploration} introduce two neural networks that take the state as input---one fixed as a target network and the other learned to predict outputs of the target network using states from the agent’s rollouts.
\cite{burda2018exploration, yang2024exploration} use prediction error as an intrinsic reward. 
Since this error tends to be higher for out-of-distribution states, it effectively encourages the agent to explore unvisited areas.
However, due to the non-stationary intrinsic rewards and the challenge of appropriately scaling them, the resulting policy often suffers from poor credit assignment, as evidenced by our experimental results in \Cref{fig:data}.
% Talk about the density model
Additionally, \cite{bellemare2016unifying} introduce density model to approximate the state visitation counts of the agent in continuous environments and uses such counts to drive the agent to visit less-visited states.
However, this also comes at the cost of increased computational complexity and often results in poor performance due to inaccurate density approximation.

% \paragraph{Hierarchical RL with Laplacian-based intrinsic rewards.}
Hierarchical RL with Laplacian-based intrinsic rewards has been widely used to solve sparse-reward environments. 
These rewards are typically derived by decomposing a diffusive representation of the transition dynamics, where the top eigenvectors guide the agent toward distant regions from the initial state. 
Graph Laplacians \cite{wang2021towards, wu2018laplacian, machado2017laplacian, gomez2023proper} and successor representations \cite{dayan1993improving, machado2017eigenoption, ramesh2019successor} are prominent techniques used to construct such representations. 
However, hierarchical RL inherently suffers from sub-optimality and sample complexity, as demonstrated in \Cref{fig:data}.

% Like ours, model-agnostic meta-learning (MAML) \cite{finn2017model} backpropagates policy gradients---computed using task-specific (intrinsic) rewards rather than true (extrinsic) rewards---from exploratory policies to update the base policy.
% However, MAML uses the resulting base policy for a few-shot adaptation for unseen tasks, whereas IRPO uses the best-performing exploratory policy from a resulting base policy for sparse-reward environments.

\section{Conclusion}
We propose an algorithm, IRPO, that uses intrinsic rewards to directly optimize a policy for extrinsic rewards in sparse-reward environments. 
% we present surrogate policy gradient
Our key idea is to use a new surrogate policy gradient, called the IRPO gradient, that is computed by optimizing exploratory policies using intrinsic rewards from a base policy and backpropagating gradients of those exploratory policies to the base policy using the chain rule.
We empirically show that IRPO outperforms relevant baselines in a wide range of sparse-reward environments, including discrete and continuous ones.
We also formally analyze the optimization problem that IRPO solves.
% Although the IRPO gradient may rely on the quality of intrinsic rewards,
% we showed that IRPO is more robust to the quality of intrinsic rewards than hierarchical reinforcement learning. 
% Additionally, we present extensive ablation studies to justify each component of IRPO.
% While this work concentrates on navigation tasks using Laplacian-based intrinsic rewards,
Future directions include extending this framework to non-navigation tasks, such as dexterous manipulation, using alternative intrinsic rewards (we use Laplacian-based intrinsic rewards in this work), and improving sample efficiency.

\section*{Impact Statement}
This paper presents work whose goal is to advance the field of reinforcement learning. 
The proposed algorithm improves performance in sparse-reward environments and its potential societal consequences are consistent with those of the broader field of reinforcement learning. 
We do not believe that there are any specific negative ethical consequences or societal impacts that must be highlighted here.
% Authors are \textbf{required} to include a statement of the potential broader impact of their work, including its ethical aspects and future societal consequences. This statement should be in an unnumbered section at the end of the paper (co-located with Acknowledgements -- the two may appear in either order, but both must be before References), and does not count toward the paper page limit. In many cases, where the ethical impacts and expected societal implications are those that are well established when advancing the field of Machine Learning, substantial discussion is not required, and a simple statement such as the following will suffice:

% ``This paper presents work whose goal is to advance the field of Machine Learning. There are many potential societal consequences of our work, none which we feel must be specifically highlighted here.''

% The above statement can be used verbatim in such cases, but we encourage authors to think about whether there is content which does warrant further discussion, as this statement will be apparent if the paper is later flagged for ethics review.

% In the unusual situation where you want a paper to appear in the
% references without citing it in the main text, use \nocite
% \nocite{langley00}

\bibliography{example_paper}
\bibliographystyle{icml2026}

%%%%%%%%%%%%%%%%%%%%%%%%%%%%%%%%%%%%%%%%%%%%%%%%%%%%%%%%%%%%%%%%%%%%%%%%%%%%%%%
%%%%%%%%%%%%%%%%%%%%%%%%%%%%%%%%%%%%%%%%%%%%%%%%%%%%%%%%%%%%%%%%%%%%%%%%%%%%%%%
% APPENDIX
%%%%%%%%%%%%%%%%%%%%%%%%%%%%%%%%%%%%%%%%%%%%%%%%%%%%%%%%%%%%%%%%%%%%%%%%%%%%%%%
%%%%%%%%%%%%%%%%%%%%%%%%%%%%%%%%%%%%%%%%%%%%%%%%%%%%%%%%%%%%%%%%%%%%%%%%%%%%%%%
\newpage
\appendix
\onecolumn

% \section{Theoretical Analysis} \label{app:theoretical_analysis}

\section{Proof of Corollary \ref{cor:vanishment_of_policy_gradient}}\label{app:theoretical_analysis}

% Assumptions \ref{assump:sparse_rewards} and \ref{assump:intrinsic_rewards} motivate the use of intrinsic reward-driven policy optimization. 
% In particular, we will show that the $\ell_2$-norm of the policy gradient, when using the reward defined in Assumption~\ref{assump:sparse_rewards}, 
% vanishes to zero, thereby justifying the use of information from the exploratory policy $\pi_{\tilde{\theta}}$ to update the base policy $\pi_\theta$.
% Assumption \ref{assump:sparse_rewards} will be used to show the $\ell_2$-norm of the policy gradient vanishes as the sparsity of rewards increases.
% , while Assumption \ref{assump:intrinsic_rewards} implies that this issue is less susceptible for using intrinsic rewards.
% Assumption \ref{assump:bound_on_the_policy_log_gradient} is for uniform boundedness of the policy gradient and is common in RL (e.g., \cite{papini2018stochastic, fallah2020convergence}).
%

% This assumption is standard in the policy gradient literature (e.g., \cite{papini2018stochastic, schulman2015trust}). Moreover, it is trivial to enforce in practice by clipping the gradients.

\begin{assumption}[Sparse Reward]\label{assump:sparse_rewards}
The environment has a bounded reward function $R: \mc{S} \times \mc{A} \rightarrow [0, R_{\max}]$ that is sparse, such that the probability of having a positive reward under any stochastic policy $\pi$ is bounded by a constant $\epsilon \in (0, 1)$ as follows:
\begin{equation}\label{eqn:positive_reward_probability}
    0 < \mathbb{P}_{d^\pi}\left(R(s, a) > 0 \right) \leq \epsilon.
\end{equation}
\end{assumption}

\begin{assumption}[Bounded Gradient of the Logarithm of Policy]\label{assump:bound_on_the_policy_log_gradient}
The $\ell_2$-norm of the gradient of the logarithm of a stochastic policy $\pi_\theta$ is uniformly bounded by a constant $\kappa \geq 0$ as follows:
\begin{equation}
    \left\| \nabla_\theta \log \pi_\theta(a \mid s) \right\|_2 \leq \kappa, \quad \forall \theta\in\mb{R}^m,\; s \in \mathcal{S}, \; a \in \mathcal{A}.
\end{equation}
\end{assumption}

Under the sparse-reward setting in Assumption \ref{assump:sparse_rewards}, we first derive a bound on the expected action-value function under the discounted state occupancy $d^\pi$ of any stochastic policy $\pi$.
\begin{lemma}
    [Bounded Expected Action-Value Function in Sparse-Reward Settings] Assume a sparse-reward setting as in  Assumption \ref{assump:sparse_rewards} and a discount factor $ \gamma \in [0, 1) $.
    Then, the expected action-value function under the discounted state occupancy  $d^\pi$ of any stochastic policy $\pi$ is bounded by
    \begin{equation}
        \mb{E}_{d^\pi} \left[ Q^{\pi}(s,a)\right] \leq \frac{\epsilon R_{\max}}{1 - \gamma}.
    \end{equation}
    \label{lemma:bound_on_action_value_function}
\end{lemma}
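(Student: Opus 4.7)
The plan is to unroll the action-value function as a discounted sum of future rewards and bound each term via the sparsity assumption. I would start from the definition $Q^\pi(s,a) = \mathbb{E}_\pi[\sum_{l=0}^\infty \gamma^l r_{t+l} \mid s_t=s, a_t=a]$, take the expectation under $(s,a)\sim d^\pi$, and interchange sum and expectation (justified by nonnegativity of rewards) to obtain
\[
\mathbb{E}_{d^\pi}[Q^\pi(s,a)] = \sum_{l=0}^\infty \gamma^l \bar r_l,
\]
where $\bar r_l$ denotes the expected reward at step $l$ for the process started at $(s_0,a_0)\sim d^\pi$ and driven by $\pi$.

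Next I would bound $\bar r_l$ using reward sparsity. Since $R(s,a)\in[0,R_{\max}]$ vanishes outside $\mathcal{R}^+ := \{(s,a): R(s,a)>0\}$, we have $\bar r_l \le R_{\max}\,\mathbb{P}((s_l,a_l)\in\mathcal{R}^+)$. The discounted sum $\sum_{l=0}^\infty \gamma^l \mathbb{P}((s_l,a_l)\in\mathcal{R}^+)$ equals $\frac{1}{1-\gamma}\mathbb{P}_{\tilde d^\pi}(R>0)$, where $\tilde d^\pi$ denotes the normalized discounted occupancy of $\pi$ with initial distribution $d^\pi$ in place of $d_0$. Invoking Assumption~\ref{assump:sparse_rewards} on this shifted process---still a stochastic policy, just with a different initial law---gives $\mathbb{P}_{\tilde d^\pi}(R>0) \le \epsilon$, and combining these inequalities yields the claimed bound $\mathbb{E}_{d^\pi}[Q^\pi] \le \epsilon R_{\max}/(1-\gamma)$.

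The main obstacle is the step that invokes Assumption~\ref{assump:sparse_rewards} on the shifted occupancy $\tilde d^\pi$. The assumption is phrased for $d^\pi$ induced from the nominal initial distribution $d_0$, whereas the argument needs it to hold for $\pi$ started from $d^\pi$. I would address this by reading ``any stochastic policy'' broadly: the procedure ``sample $s_0\sim d^\pi$ then roll out $\pi$'' is operationally a policy on an augmented initial law, and the environmental intuition underlying the assumption---positive rewards are intrinsically rare---does not discriminate between initial distributions. If a strictly literal reading of the assumption is preferred, the same line of argument at least produces the cleaner identity-based bound $J(\pi) \le \epsilon R_{\max}/(1-\gamma)$ via $(1-\gamma) J(\pi) = \mathbb{E}_{d^\pi}[R(s,a)]$, which would still suffice to drive Corollary~\ref{cor:vanishment_of_policy_gradient} after a cosmetic reshaping of the gradient bound.
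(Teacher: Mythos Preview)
Your proposal is correct and follows essentially the same approach as the paper: unroll $Q^\pi$ into a discounted sum of rewards, exchange expectation and sum, bound each per-step expected reward by $\epsilon R_{\max}$ via the sparsity assumption, and sum the geometric series. The concern you raise about invoking Assumption~\ref{assump:sparse_rewards} on the shifted occupancy $\tilde d^\pi$ is thoughtful, but the paper's own proof does not confront it either---it simply applies $\mathbb{E}_{d^\pi}[R(s_{t+l},a_{t+l})]\le \epsilon R_{\max}$ to every $l$ without comment, so your treatment is, if anything, more careful than the original.
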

\begin{proof}
    Recall that the expected action-value function under the discounted state occupancy $d^\pi$ is given as follows:
    \begin{equation}
        \mb{E}_{d^\pi}\left[Q^{\pi}(s,a)\right] = \mathbb{E}_{d^\pi} \left[ \sum_{l=0}^\infty \gamma^l R(s_{t+l}, a_{t+l}) \right].
    \end{equation}

    We upper bound this expectation as follows:
    \begin{equation}
        \begin{aligned}
            \mathbb{E}_{d^\pi} \left[ \sum_{l=0}^\infty \gamma^l R(s_{t+l}, a_{t+l})  \right] \overset{(1)}{=} \sum_{l=0}^\infty \gamma^l \mathbb{E}_{d^\pi} \left[ R(s_{t+l}, a_{t+l})  \right] \overset{(2)}{\leq} \epsilon R_{\max}\sum_{l=0}^\infty \gamma^l  \overset{(3)}{=} \frac{\epsilon R_{\max}}{1 - \gamma},
        \end{aligned}
    \end{equation}
    where (1) uses the linearity of expectation, (2) uses the following definition:
    \begin{equation}
        \begin{aligned}
            \mathbb{E}_{d^\pi}[R(s,a)] &= \mathbb{E}_{d^\pi}[R \mid R = 0] \cdot \mathbb{P}_{d^\pi}(R = 0) + \mathbb{E}_{d^\pi}[R \mid R > 0] \cdot \mathbb{P}_{d^\pi}(R > 0) \\
        &\leq R_{\max} \cdot \mathbb{P}_{d^\pi}(R(s,a) > 0) \leq \epsilon R_{\max}, \quad (\text{Assumption \ref{assump:sparse_rewards}})
        \end{aligned}
    \end{equation}
    and (3) uses the formula for a geometric series.
    
    % \begin{itemize}
        % \item Step (1) expands the infinite sum explicitly.
        % \item Step (2) uses the linearity of expectation. 
        % \item Step (3) applies the bound on the expectation of rewards under the policy $\pi_\theta$ as
        % \begin{equation*}
            % \begin{aligned}
                % \mathbb{E}_{\pi_\theta}[r_t] = & \int r_t \, \mathrm{d}\mathbb{P}(r_t \mid \pi_\theta) = \int r_t \, \mathrm{d}\mb{P}(r_t = 0 \mid \pi_\theta) + \int r_t \, \mathrm{d}\mb{P}(r_t > 0 \mid \pi_\theta)   \\
                % \leq & R_{\max} \cdot \int \mathrm{d}\mb{P}(r_t > 0\mid \pi_\theta) \leq \epsilon R_{\max}, \quad \forall t, 
            % \end{aligned}
        % \end{equation*}
        % \begin{equation*}
            % \begin{aligned}
                % \mathbb{E}_{d^\pi}[r_t] &= \mathbb{E}_{d^\pi}[r_t \mid r_t = 0] \mb{P}_{d^\pi}(r_t = % 0) + \mathbb{E}_{d^\pi}[r_t \mid r_t > 0]\mb{P}_{d^\pi}(r_t > 0)    \\
                % &\leq R_{\max} \cdot \mb{P}_{d^\pi}(r_t > 0)  \leq \epsilon R_{\max}, \quad \forall t, 
            % \end{aligned}
        % \end{equation*}
        % where the last inequality follows from Assumption \ref{assump:sparse_rewards}.
        % \item Step (4) uses the formula for a Neumann series for $\gamma < 1$.
    % \end{itemize}
\end{proof}

Given a bounded expected action-value function and Assumption \ref{assump:bound_on_the_policy_log_gradient}, we now show that the $\ell_2$-norm of the policy gradient for any stochastic policy $\pi_\theta$ is bounded.
\begin{lemma}
    [Bounded Policy Gradient in Sparse-Reward Settings] Assume a sparse-reward setting as in Assumption \ref{assump:sparse_rewards}, a bounded policy log-gradient as shown in Assumption \ref{assump:bound_on_the_policy_log_gradient}, and a discount factor $ \gamma \in [0, 1) $.
    Then, the $\ell_2$-norm of the gradient of any stochastic policy $\pi_\theta$ is uniformly bounded by
    \begin{equation}
        \|\nabla_\theta J(\theta)\|_2 \leq \frac{\epsilon R_{\max} \kappa}{1 - \gamma}, \quad \forall \theta\in\mb{R}^m.
    \end{equation}
    \label{lemma:bound_on_the_policy_gradient}
\end{lemma}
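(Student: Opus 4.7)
The plan is a three-step chain that composes the two previously established bounds in the excerpt. Specifically, I would begin with the standard policy gradient identity
\begin{equation*}
\nabla_\theta J(\theta) = \mathbb{E}_{d^{\pi_\theta}}\left[ Q^{\pi_\theta}(s,a)\, \nabla_\theta \log \pi_\theta(a\mid s) \right],
\end{equation*}
and push the $\ell_2$-norm inside the expectation via Jensen's inequality applied to the convex map $x \mapsto \|x\|_2$. Since rewards lie in $[0, R_{\max}]$ by Assumption \ref{assump:sparse_rewards}, we have $Q^{\pi_\theta}(s,a) \geq 0$, so no absolute value is needed and the scalar $Q^{\pi_\theta}(s,a)$ factors cleanly out of the norm, yielding
\begin{equation*}
\|\nabla_\theta J(\theta)\|_2 \leq \mathbb{E}_{d^{\pi_\theta}}\left[ Q^{\pi_\theta}(s,a)\, \|\nabla_\theta \log \pi_\theta(a\mid s)\|_2 \right].
\end{equation*}

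Next, I would apply Assumption \ref{assump:bound_on_the_policy_log_gradient} to uniformly bound $\|\nabla_\theta \log \pi_\theta(a\mid s)\|_2 \leq \kappa$, which factors outside the expectation to give $\|\nabla_\theta J(\theta)\|_2 \leq \kappa\, \mathbb{E}_{d^{\pi_\theta}}[Q^{\pi_\theta}(s,a)]$. Finally, I would invoke Lemma \ref{lemma:bound_on_action_value_function}, which bounds the expected action-value function by $\epsilon R_{\max}/(1-\gamma)$, yielding the claimed bound $\epsilon R_{\max} \kappa / (1-\gamma)$. Since none of the three bounds depend on $\theta$, the conclusion holds uniformly over $\theta \in \mb{R}^m$, and Corollary \ref{cor:vanishment_of_policy_gradient} then follows immediately by sending $\epsilon \to 0$.

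The proof is essentially a direct composition of the two earlier results, so I do not anticipate any substantive obstacle. The only mild subtlety is justifying the interchange of norm and expectation; this is a standard Jensen's inequality argument requiring only integrability of the integrand, which is already ensured by the bounded rewards in Assumption \ref{assump:sparse_rewards} (so that $Q^{\pi_\theta}$ is bounded by $R_{\max}/(1-\gamma)$) together with the bounded log-gradient in Assumption \ref{assump:bound_on_the_policy_log_gradient}.
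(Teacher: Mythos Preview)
Your proposal is correct and follows essentially the same argument as the paper: Jensen's inequality to move the norm inside the expectation, nonnegativity of $Q^{\pi_\theta}$ to factor it out of the norm, the uniform bound $\kappa$ from Assumption~\ref{assump:bound_on_the_policy_log_gradient}, and finally Lemma~\ref{lemma:bound_on_action_value_function}. The only cosmetic difference is that the paper first writes $|Q^{\pi_\theta}(s,a)|$ via absolute homogeneity and then drops the absolute value using $Q \geq 0$, whereas you invoke nonnegativity up front.
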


\begin{proof}
    Recall the definition of the policy gradient from \Cref{eqn:policy_gradient_update} and consider its $\ell_2$-norm
    \begin{equation}\label{eqn:l2_norm_of_policy_gradient}
        \left\| \nabla_\theta J(\theta) \right\|_2 = \left\| \mathbb{E}_{d^{\pi_\theta}} \left[ Q^{\pi_\theta}(s, a) \nabla_\theta \log \pi_\theta(a \mid s) \right] \right\|_2.
    \end{equation}
    
    We apply Jensen's inequality, exploiting the convexity of the $\ell_2$-norm.
    Specifically, for a convex function $\varphi(x)$, the inequality $\varphi(\mathbb{E}[X]) \leq \mathbb{E}[\varphi(X)]$ holds. Applying this to the policy gradient expression in \Cref{eqn:l2_norm_of_policy_gradient} above yields the following upper bound:
    \begin{equation}
        \begin{aligned}
        \left\| \nabla_\theta J(\theta) \right\|_2
        & = \left\| \mathbb{E}_{d^{\pi_\theta}} \left[ Q^{\pi_\theta}(s,a) \nabla_\theta \log \pi_\theta(a \mid s) \right] \right\|_2, \\
        &\overset{(1)}{\leq} \mathbb{E}_{d^{\pi_\theta}} \left[ \left\| Q^{\pi_\theta}(s,a) \nabla_\theta \log \pi_\theta(a \mid s) \right\|_2 \right],  \\
        &\overset{(2)}{\leq} \mathbb{E}_{d^{\pi_\theta}} \left[ \left| Q^{\pi_\theta}(s,a) \right| \cdot \left\| \nabla_\theta \log \pi_\theta(a \mid s) \right\|_2 \right], \\
        &\overset{(3)}{\leq} \kappa \cdot \mathbb{E}_{d^{\pi_\theta}} \left[ Q^{\pi_\theta}(s,a) \right],  \\
        &\overset{(4)}{\leq}  \frac{\epsilon R_{\max} \kappa}{1 - \gamma},
        \end{aligned}
    \end{equation}
    where (1) uses Jensen's inequality, (2) uses the absolute homogeneity of the norm, (3) uses Assumption \ref{assump:bound_on_the_policy_log_gradient} with $Q(s,a) \geq 0$, and (4) uses Lemma \ref{lemma:bound_on_action_value_function}.
\end{proof}

We now introduce a direct corollary of Lemma \ref{lemma:bound_on_the_policy_gradient} under the sparse-reward assumption (Assumption \ref{assump:sparse_rewards}).

\noindent\textbf{Corollary \ref{cor:vanishment_of_policy_gradient}}
(Vanishing Policy Gradient in Sparse-Reward Settings). 
\emph{
Assume a sparse-reward setting as in Assumption \ref{assump:sparse_rewards}, a bounded policy log-gradient as shown in Assumption \ref{assump:bound_on_the_policy_log_gradient}, and a discount factor $ \gamma \in [0, 1) $.
    Then, the $\ell_2$-norm of the policy gradient of any stochastic policy $\pi_\theta$ approaches zero as the sparsity of rewards increases as follows:
    \begin{equation}
        \left\| \nabla_\theta J(\theta) \right\|_2 \longrightarrow 0 \quad \text{as } \epsilon \to 0.
    \end{equation}
}
\begin{proof}
This directly follows from Lemma \ref{lemma:bound_on_the_policy_gradient}, where we see that as rewards get sparser (i.e., $\epsilon \rightarrow 0$), the policy gradient approaches zero.
\end{proof}

\begin{figure*}[t]
    \centering
    \includegraphics[width=0.95\linewidth]{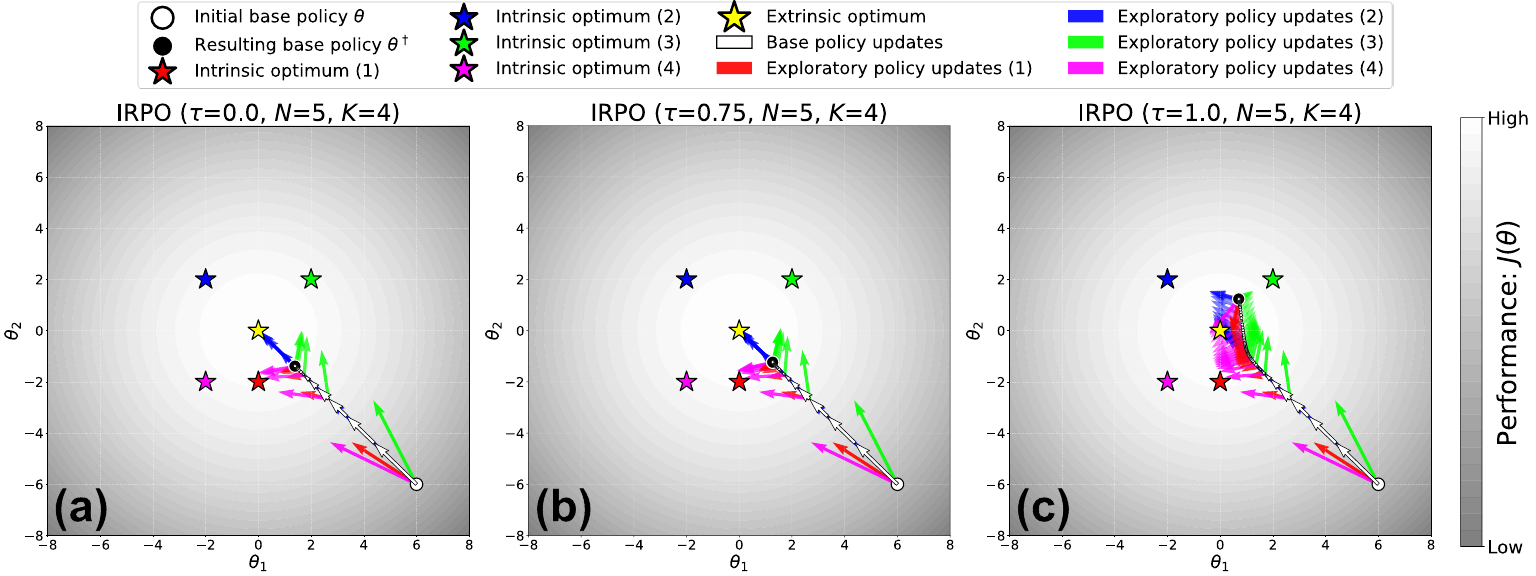}
    \caption{We visualize the update mechanism of IRPO for $K$ intrinsic rewards and $N$ exploratory policy updates, with a varying temperature parameter $\tau$.}
    \label{fig:analysis2}
\end{figure*}

\section{Empirical Analysis}\label{app:empirical_analysis}
\Cref{fig:analysis2} provides a visual illustration of IRPO's update mechanism with multiple intrinsic rewards and the impact of the temperature parameter $\tau$ on the gradient aggregation mechanism introduced in \Cref{eqn:IRPO_gradient}. 
We introduce three additional intrinsic performance objectives (see others in \Cref{eqn:ext_and_int_objectives})  to facilitate the analysis as follows:
\begin{equation}
    \tilde{J}_2(\theta) =  -\left\| \theta - \begin{bmatrix} -2 \\ 2 \end{bmatrix} \right\|^2_2,\quad 
    \tilde{J}_3(\theta) = -\left\| \theta - \begin{bmatrix} 2 \\ 2 \end{bmatrix} \right\|^2_2,\quad 
    \tilde{J}_4(\theta) =  -\left\| \theta - \begin{bmatrix} -2 \\ -2 \end{bmatrix} \right\|^2_2.
\end{equation}
We report the results for $\tau \in \{0.0, 0.75, 1.0\}$.

From \Cref{fig:analysis2}a and \Cref{fig:analysis2}b, it is evident that the base policy converges to a location where $N$ exploratory updates maximizing $\tilde{J}_2$---the intrinsic objective yielding the highest extrinsic performance gain---reaches the extrinsic optimum. 
Although increasing $\tau$ promotes the aggregation of gradients from multiple objectives, we observe that the behavior at $\tau=0.75$ remains similar to the greedy baseline ($\tau=0.0$). 
However, for $\tau=1.0$ (\Cref{fig:analysis2}c), the base policy shifts to a region where most of the exploratory policies (driven by $\tilde{J}_1, \tilde{J}_2, \text{and } \tilde{J}_4$) converge near the extrinsic optimum, thereby maximizing the aggregate performance of the exploratory policies.

\section{Environment Details} \label{app:environmental_parameters}
In each environment visualized in \Cref{fig:env}, an agent is initialized at a fixed starting location (denoted as blue), and the goal is placed at a distant location (denoted as orange). 
For discrete environments, a state observed by the agent consists of the Cartesian coordinates of the agent and the goal, and we use the following action set $ \mathcal{A} = \{ \text{left}, \text{up}, \text{right}, \text{down} \} $.
For continuous environments, the state includes the Cartesian coordinates of the agent and the goal, and the agent's kinematic information, such as velocity. 
Additionally, we adopt the default action set provided by \href{https://robotics.farama.org/index.html}{\texttt{Gymnasium-Robotics}} \cite{gymnasium_robotics2023github} for each environment.
For all environments, the agent receives a $+1$ reward for reaching the goal. 
In \emph{AntMaze} environments, we also give a $-1$ penalty and terminate the episode if the agent jumps aggressively or turns over, ensuring it first learns to walk effectively.
Specifically, we require the vertical distance from the center of the torso to the ground to be between 0.35 and 1.00.
Additionally, we noticed that the default success threshold (distance $< 0.35$) required the agent to perform unnatural jumps to reach the goal, which made the \emph{AntMaze} environments dramatically difficult. 
To address this, we increased the success threshold from 0.35 to 0.5.
Environment parameters---including discount factor, horizon, and state and action dimensions---for each environment are summarized in \Cref{tab:environmental_parameters}.
\begin{table}[t!]
    \centering
    \small
    \caption{Environment parameters used in our experiments.}
    \label{tab:environmental_parameters}
    \begin{tabular}{l | cccccc}
        \toprule\toprule
        \textbf{Environment} & $\gamma$ & Horizon & State Dimension & Action Dimension & Number of Subpolicies\\
        \midrule
        Maze-v1        & 0.99  & 300 & 4  & 4 & 6 \\
        Maze-v2        & 0.99  & 300 & 4  & 4 & 6 \\
        FourRooms      & 0.99  & 100 & 4  & 4 & 4 \\
        PointMaze-v1   & 0.999  & 500 & 8  & 2 &  4\\
        PointMaze-v2   & 0.999 & 500 & 8  & 2 & 4 \\
        FetchReach     & 0.99  & 50 & 16 & 4 & 4 \\
        AntMaze-v1   & 0.9999  & 1000 & 27 & 8 & 3 \\
        AntMaze-v2   & 0.9999 & 1000 & 27 & 8 & 3 \\
        AntMaze-v3     & 0.9999  & 1500 & 27 & 8 & 3 \\
        \bottomrule\bottomrule
    \end{tabular}
\end{table}

\section{Algorithm Details} \label{app:network_parameters}
We present algorithm hyperparameters used for our experiments in \Cref{tab:algorithmic_parameters}, which we found them by a hyperparameter sweep.
We also provide implementation details in the following sections. 

% For algorithms with parameters that are sensitive and thus require specific values, we describe these parameters and their settings in a dedicated section, overwriting values listed in \Cref{tab:algorithmic_parameters}.

\subsection{IRPO}\label{app_subsec:IRPO}
When we optimize a base policy using the IRPO gradient defined in \Cref{eqn:IRPO_gradient}, we gradually anneal the temperature $\tau$ from 1 to 0 over the first 10\% of training time steps. 
This schedule encourages exploration of diverse gradients from exploratory policies in the initial stage of training, while encouraging exploitation as training converges.

Additionally, directly computing the policy gradient in \Cref{eqn:backpropagation_of_extrinsic_policy_gradient} is computationally expensive, as it requires computing the full Jacobian.
To address this, we leverage the vector-Jacobian product of automatic differentiation libraries such as \href{https://pytorch.org/tutorials/beginner/blitz/autograd_tutorial.html}{\texttt{PyTorch}} by leveraging the computational graph history across exploratory policy updates.
% This approach avoids the explicit computation of the Jacobian when computing \Cref{eqn:backpropagation_of_extrinsic_policy_gradient} by leveraging the computational graph history across exploratory policy updates. % and computing the gradient via VJP.
In turn, this reduces the computational complexity from $\mathcal{O}(m^2)$ to $\mathcal{O}(m)$ per backward pass, where $m$ is the number of policy parameters.

% The exploratory policy update uses standard gradient ascent with an actor learning rate of $1\text{e}{-2}$. 
From \Cref{tab:algorithmic_parameters}, we adopt a relatively high learning rate for exploratory policy updates because the exploratory policy update performs a single-batch gradient update, unlike other algorithms that rely on multiple mini-batch updates. 
% We plot the intrinsic rewards we used in our experiments and an ablation in \Cref{fig:visualization_of_ALLO}.
However, we use a relatively small target KL divergence of $1\text{e}-3$ for trust-region updates of the IRPO gradient, as it showed more stable convergence than a higher KL divergence threshold.
% While the target KL of $1\text{e}-2$ is a widely used number in the literature and is already small, we observed that this is high for IRPO.

\begin{table}[t!]
    \centering
    \caption{Algorithm hyperparameters used for our experiments. General parameters apply to all algorithms. We show algorithm-specific hyperparameters for IRPO and baseline algorithms.}
    \label{tab:algorithmic_parameters}
    \begin{tabular}{lc|lcc}
        \toprule
        \multicolumn{2}{c|}{\textbf{General Parameters}} & \multicolumn{3}{c}{\textbf{Algorithm-Specific Parameters}} \\
        \cmidrule(r){1-2} \cmidrule(l){3-5}
        \textbf{Parameter} & \textbf{Value} & \textbf{Parameter} & \textbf{IRPO} & \textbf{Baseline} \\
        \midrule
        Actor hidden layers & $(64, 64)$ & Target KL divergence & $1 \times 10^{-3}$ & $1 \times 10^{-2}$ \\
        Critic hidden layers & $(128, 128)$ & Actor learning rate & $1 \times 10^{-2}$ & $3 \times 10^{-4}$ \\
        Activation & Tanh & Critic learning rate & $1 \times 10^{-3}$ & $3 \times 10^{-4}$ \\
        PPO clip ratio & $0.2$ & Discount factor & \multicolumn{2}{c}{listed in \Cref{tab:environmental_parameters}} \\
        \bottomrule
    \end{tabular}
\end{table}

\subsection{Hierarchical RL}
Our HRL baseline \cite{sutton1999between} pretrains subpolicies using intrinsic rewards (from ALLO \cite{gomez2023proper}) with PPO as the optimization algorithm.
That is, given $K$ intrinsic reward functions, we learn $K$ subpolicies with PPO, forming the following subpolicy set:
\begin{equation}
    \Pi = \{ \pi_{\tilde{\theta}_k} \mid k = 1,\dots, K \} \cup \{ \pi_{\text{rw}} \},
\end{equation}
where $\pi_{\text{rw}}$ denotes an additional random walk policy for a nearby fine-grained movement.
We then optimize a high-level policy $\pi_\theta$ whose action space is defined as $\mathcal{A} = \Pi$, selecting which subpolicy to activate.
Following a common hierarchical RL design, any subpolicies can be initiated over the full state space $\mathcal{S}$ and terminate either upon reaching their optima or after a fixed number of time steps, which was set to $10$ in our experiments.

\subsection{PSNE}
We follow the parameter noise injection implementation using TRPO as in~\cite{plappert2018parameter}.
We sample $K$ independent noises from a Gaussian distribution $ \epsilon_k \sim \mathcal{N}(0, \mb{I}) $ and inject these into the base policy parameters via re-parameterization as follows: $ \tilde{\theta}_k = \theta + \epsilon_k \sqrt{\Sigma} $, where $ \theta $ are the base policy parameters, $\epsilon_k$ is the $k$-th noise, and $ \Sigma $ is the learnable variance. 
This injection is performed in a way that constrains the KL divergence between the base and noise-injected policies using backtracking line search to prevent drastic updates. 
Rollouts are then collected using noise-injected policies $\{ \pi_{\tilde{\theta}_k}\}_{k=1}^K$, and both the base policy $ \pi_\theta $ and the variance $ \Sigma $ are updated accordingly based on the average performance of $\{ \pi_{\tilde{\theta}_k}\}_{k=1}^K$. For more details, we refer the reader to \cite{plappert2018parameter}.

\subsection{DRND}
Based on the PPO algorithm, DRND \cite{yang2024exploration} encourages exploration by incentivizing the visitation of states with high model uncertainty and low visitations. 
We directly adopt the \href{https://github.com/yk7333/DRND}{DRND codebase} with its default DRND hyperparameters (our environments are considered within the original DRND implementation), while keeping relevant PPO parameters as listed in \Cref{tab:algorithmic_parameters}.

\subsection{PPO}
We follow the standardized PPO implementation as described in \cite{shengyi2022the37implementation}.

\subsection{TRPO}
We directly adopt the TRPO algorithm \cite{schulman2015trust} from \href{https://github.com/ikostrikov/pytorch-trpo}{TRPO codebase}, while keeping relevant hyperparameters as listed in \Cref{tab:algorithmic_parameters}.

\begin{figure*}[t!]
    \centering
    \includegraphics[width=0.95\linewidth]{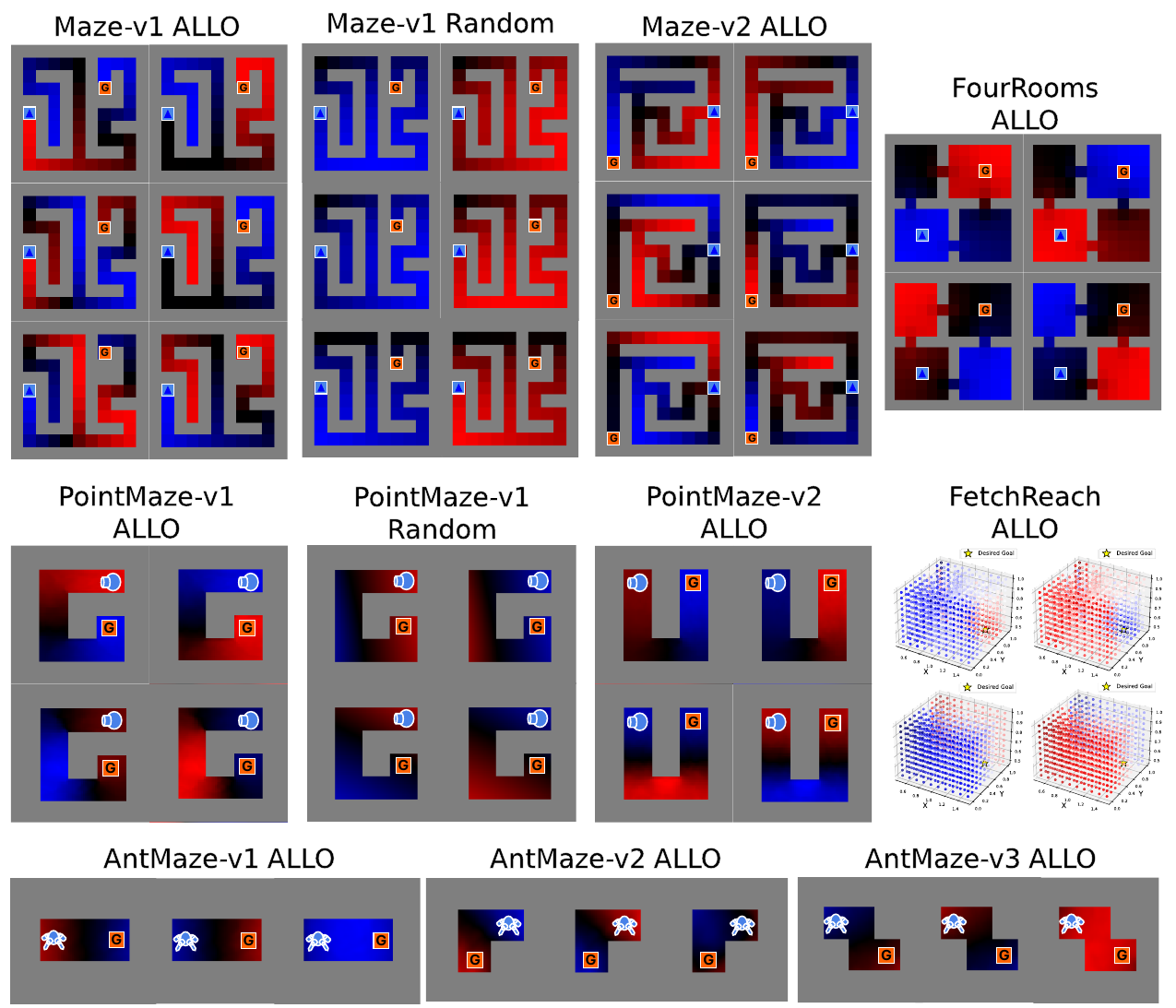}
    \caption{We show the intrinsic rewards, normalized to be in the range of $[-1, +1]$, generated by ALLO and use by IRPO-ALLO and HRL-ALLO. We also show one set of random intrinsic rewards used in our experiments. Color intensity represents the magnitude of the intrinsic reward: red indicates $+1$, black indicates $0$, and blue indicates $-1$.}
    \label{fig:visualization_of_ALLO}
\end{figure*}

\section{Intrinsic Rewards from ALLO} \label{app:intrinsic_reward_maps}
We use the \href{https://github.com/tarod13/laplacian_dual_dynamics}{ALLO codebase} for our experiments with its default hyperparameters \cite{gomez2023proper}. 
These include a discount sampling parameter of $0.9$ for discrete environments and $0.99$ for continuous ones.
Additionally, we employ a neural network with hidden layers of $(256, 256, 256, 256)$, a learning rate of $3\text{e}-4$, and ReLU activations. 
The Cartesian coordinates of the agent are used as an input, and the network is trained for $200,000$ epochs with batch size of $1024$. 
For details, we refer the reader to \cite{gomez2023proper}.
The resulting ALLO-based intrinsic rewards used in our experiments are visualized in \Cref{fig:visualization_of_ALLO}.
We used one set of intrinsic rewards for all experiments (IRPO-ALLO and HRL-ALLO) to reduce computation time---we saw negligible differences in intrinsic rewards across seeds in a limited set of experiments.

\section{Computational Complexity}\label{app:computational_time}
\begin{wrapfigure}{t}{0.35\textwidth}
    \centering
    \begin{tabular}{lcc}
        \toprule
        \textbf{Env.} & \textbf{IRPO} & \textbf{PPO} \\
        \midrule
        \emph{PointMaze-v1} & $2.21$ & $1.62$ \\
        \emph{FetchReach} & $1.12$ & $0.76$ \\
        \bottomrule
    \end{tabular}
    \caption{Mean wall-clock time over 10 random seeds in hours.}
    \label{tab:wall_clock}
\end{wrapfigure}

One potential concern may be the computational cost of IRPO. Specifically, IRPO requires learning a pair of critics for each intrinsic reward and computing the Jacobian, along with a trust-region update. 
Despite its computational cost, we show that the difference in mean wall-clock time compared to PPO is small in the \emph{PointMaze-v1} and \emph{FetchReach} environments. The mean wall-clock time was measured over 10 random seeds using an AMD Ryzen 9 7900X CPU, 32GB DDR5-6400 RAM, and an NVIDIA RTX 4070 SUPER GPU and is reported in \Cref{tab:wall_clock}.

%%%%%%%%%%%%%%%%%%%%%%%%%%%%%%%%%%%%%%%%%%%%%%%%%%%%%%%%%%%%%%%%%%%%%%%%%%%%%%%
%%%%%%%%%%%%%%%%%%%%%%%%%%%%%%%%%%%%%%%%%%%%%%%%%%%%%%%%%%%%%%%%%%%%%%%%%%%%%%%

\end{document}